\newtheorem{theorem}{Theorem}
\newtheorem{proposition}{Proposition}
\newtheorem{definition}{Definition}
\newtheorem{assumption}{Assumption}
\newtheoremstyle{TheoremNum}
        {\topsep}{\topsep}              %%% space between body and thm
        {\itshape}                      %%% Thm body font
        {}                              %%% Indent amount (empty = no indent)
        {\bfseries}                     %%% Thm head font
        {.}                             %%% Punctuation after thm head
        { }                             %%% Space after thm head
        {\thmname{#1}\thmnote{ \bfseries #3}}%%% Thm head spec
    \theoremstyle{TheoremNum}
    \newtheorem{theoremnum}{Theorem}
\newcommand{\bm}[1]{\mathbf{#1}}
\newcommand{\envs}{\mathcal{E}}
\newcommand{\states}{\mathcal{X}}
\newcommand{\indep}{\perp}
\icmltitlerunning{Invariant Causal Prediction for Block MDPs}
\begin{document}

\twocolumn[

\icmltitle{Invariant Causal Prediction for Block MDPs}

% It is OKAY to include author information, even for blind
% submissions: the style file will automatically remove it for you
% unless you've provided the [accepted] option to the icml2019
% package.

% List of affiliations: The first argument should be a (short)
% identifier you will use later to specify author affiliations
% Academic affiliations should list Department, University, City, Region, Country
% Industry affiliations should list Company, City, Region, Country

% You can specify symbols, otherwise they are numbered in order.
% Ideally, you should not use this facility. Affiliations will be numbered
% in order of appearance and this is the preferred way.
\icmlsetsymbol{equal}{*}

\begin{icmlauthorlist}
\icmlauthor{Amy Zhang}{equal,mcgill,mila,fair}
\icmlauthor{Clare Lyle}{equal,ox}
\icmlauthor{Shagun Sodhani}{fair}
\icmlauthor{Angelos Filos}{ox}
\icmlauthor{Marta Kwiatkowska}{ox}
\icmlauthor{Joelle Pineau}{mcgill,mila,fair}
\icmlauthor{Yarin Gal}{ox}
\icmlauthor{Doina Precup}{mcgill,mila,deepmind}
\end{icmlauthorlist}

\icmlaffiliation{mcgill}{McGill University}
\icmlaffiliation{deepmind}{Deepmind}
\icmlaffiliation{mila}{Mila}
\icmlaffiliation{fair}{Facebook AI Research}
\icmlaffiliation{ox}{University of Oxford}

\icmlcorrespondingauthor{Amy Zhang}{amy.x.zhang@mail.mcgill.ca}
\icmlcorrespondingauthor{Clare Lyle}{clare.lyle@univ.ox.ac.uk}

% You may provide any keywords that you
% find helpful for describing your paper; these are used to populate
% the "keywords" metadata in the PDF but will not be shown in the document
\icmlkeywords{Machine Learning, ICML}

\vskip 0.3in
]

% this must go after the closing bracket ] following \twocolumn[ ...

% This command actually creates the footnote in the first column
% listing the affiliations and the copyright notice.
% The command takes one argument, which is text to display at the start of the footnote.
% The \icmlEqualContribution command is standard text for equal contribution.
% Remove it (just {}) if you do not need this facility.

%\printAffiliationsAndNotice{}  % leave blank if no need to mention equal contribution
\printAffiliationsAndNotice{\icmlEqualContribution} % otherwise use the standard text.

\begin{abstract}
Generalization across environments is critical to the successful application of reinforcement learning algorithms to real-world challenges. In this paper, we consider the problem of learning abstractions that generalize in block MDPs, families of environments with a shared latent state space and dynamics structure over that latent space, but varying observations. We leverage tools from causal inference to propose a method of invariant prediction to learn \textit{model-irrelevance state abstractions} (MISA) that generalize to novel observations in the multi-environment setting. We prove that for certain classes of environments, this approach outputs with high probability a state abstraction corresponding to the causal feature set with respect to the return. We further provide more general bounds on model error and generalization error in the multi-environment setting, in the process showing a connection between causal variable selection and the state abstraction framework for MDPs. We give empirical evidence that our methods work in both linear and nonlinear settings, attaining improved generalization over single- and multi-task baselines.
\end{abstract}

\section{Introduction}

The canonical reinforcement learning (RL) problem assumes an agent interacting with a single MDP with a fixed observation space and dynamics structure. This assumption is difficult to ensure in practice, where state spaces are often large and infeasible to explore entirely during training. However, there is often a latent structure to be leveraged to allow for good generalization. 
As an example, a robot's sensors may be moved, or the lighting conditions in a room may change, but the physical dynamics of the environment are still the same. These are examples of environment-specific characteristics that current RL algorithms often overfit to.
In the worst case, some training environments may contain spurious correlations that will not be present at test time, causing catastrophic failures in generalization~\cite{azhang2018natrl,Song2020Observational}. To develop algorithms that will be robust to these sorts of changes, we must consider problem settings that allow for multiple environments with a shared dynamics structure. 

Recent prior works~\cite{amit2018mlpacbayes,yin2019meta} have developed generalization bounds for the multi-task problem, but they depend on the number of tasks seen at training time, which can be prohibitively expensive given how sample inefficient RL is even in the single task regime. To obtain stronger generalization results, we propose to consider a problem which we refer to as `multi-environment' RL: like multi-task RL, the agent seeks to maximize return on a set of environments, but only some of which can be trained on. 
We make the assumption that there exists some latent causal structure that is shared among all of the environments, and that the sources of variability between environments do not affect reward. This family of environments is called a \textit{Block MDP}~\cite{du2019pcid}, in which the observations may change, but the latent states, dynamics, and reward function are the same. A formal definition of this type of MDP will be presented in \cref{sec:problem_setup}. 

Though the setting we consider is a subset of the multi-task RL problem, we show in this work that the added assumption of shared structure allows for much stronger generalization results than have been obtained by multi-task approaches. Naive application of generalization bounds to the multi-task reinforcement learning setting is very loose because the learner is typically given access to only a few tasks relative to the number of samples from each task.
Indeed, \citet{cobbe2018genrl,czhang2018genrl} find that agents trained using standard methods require many thousands of environments before succeeding at `generalizing' to new environments. 

The main contribution of this paper is to use tools from \textit{causal inference} to address generalization in the Block MDP setting, proposing a new method based on the \textit{invariant causal prediction} literature. In certain linear function approximation settings, we demonstrate that this method will, with high probability, learn an optimal state abstraction that generalizes across all environments using many fewer training environments than would be necessary for standard PAC bounds. We replace this PAC requirement with requirements from causal inference on the \textit{types} of environments seen at training time. We then draw a connection between bisimulation and the minimal causal set of variables found by our algorithm, providing bounds on the model error and sample complexity of the method. We further show that using analogous invariant prediction methods for the nonlinear function approximation setting can yield improved generalization performance over multi-task and single-task baselines. We relate this method to previous work on learning representations of MDPs~\cite{gelada2019deepmdp,luo2018algorithmic} and develop multi-task generalization bounds for such representations. Code is available at \burl{https://github.com/facebookresearch/icp-block-mdp}.

\section{Background}

\subsection{State Abstractions and Bisimulation}
State abstractions have been studied as a way to distinguish relevant from irrelevant information~\cite{li2006stateabs} in order to create a more compact representation for easier decision making and planning. \citet{bertsekas1989bounds,Roy06sabounds} provide bounds for approximation errors for various aggregation methods, and \citet{li2006stateabs} discuss the merits of \textit{abstraction discovery} as a way to solve related MDPs.

Bisimulation relations are a type of state abstraction that offers a mathematically precise definition of what it means for two environments to `share the same structure'~\cite{larsen1989bisim,Givan2003EquivalenceNA}. We say that two states are bisimilar if they share the same expected reward and equivalent distributions over the next bisimilar states. 
For example, if a robot is given the task of washing the dishes in a kitchen, changing the wallpaper in the kitchen doesn't change anything relevant to the task. One then could define a bisimulation relation that equates observations based on the locations and soil levels of dishes in the room and ignores the wallpaper. These relations can be used to simplify the state space for tasks like policy transfer \citep{castro2010using}, and are intimately tied to state abstraction. For example, the \textit{model-irrelevance abstraction} described by \citet{li2006stateabs} is precisely characterized as a bisimulation relation.
\begin{definition}[Bisimulation Relations~\citep{Givan2003EquivalenceNA}]
Given an MDP $\mathcal{M}$, an equivalence relation $B$ between states is a bisimulation relation if for all states $s_1,s_2\in\mathcal{S}$ that are equivalent under $B$ (i.e. $s_1Bs_2$), the following conditions hold for all actions $a\in\mathcal{A}$:
\begin{equation*}
\begin{split}
    R(s_1,a)&=R(s_2,a) \\
    \mathcal{P}(G|s_1,a)&=\mathcal{P}(G|s_2,a),\forall G\in\mathcal{S}/B
\end{split}
\end{equation*}
Where $\mathcal{S}/B$ denotes the partition of $\mathcal{S}$ under the relation $B$, the set of all groups of equivalent states, and where $\mathcal{P}(G|s,a)=\sum_{s'\in G}\mathcal{P}(s'|s,a).$
\end{definition}
Whereas this definition was originally designed for the single MDP setting to find \textit{bisimilar} states within an MDP, we are now trying to find bisimilar states across different MDPs, or different experimental conditions. One can intuitively think of this carrying over by imagining all experimental conditions $i$ mapped to a single super-MDP with state space $\mathcal{S}=\cup_i\mathcal{S}_i$ where we give up the irreducibility assumption, i.e. we can no longer reach every state $s_i$ from any other state $s_j$.
Specifically, we say that two MDPs $M_1$ and $M_2$ are bisimilar if there exist bisimulation relations $B_1$ and $B_2$ such that $M_1/B_1$ is isomorphic to $M_2/B_2$. \textit{Bisimilar  MDPs} are therefore MDPs which are behaviourally the same.

%==============================

\subsection{Causal Inference Using Invariant Prediction}
\citet{peters2016icp} first introduced an algorithm, Invariant Causal Prediction (ICP), to find the \textit{causal feature set}, the minimal set of features which are causal predictors of a target variable, by exploiting the fact that causal models have an invariance property~\cite{pearl2009do,scholkopf2012causal}. \citet{arjovsky2019irm} extend this work by proposing invariant risk minimization (IRM, see \cref{eq:irm}), augmenting empirical risk minimization to learn a data representation free of spurious correlations. They assume there exists some partition of the training data $\mathcal{X}$ into \textit{experiments} $e \in \mathcal{E}$, and that the model's predictions take the form $Y^e = \mathbf{w}^\top \bm{\phi}{(X^e)}$. IRM aims to learn a representation $\bm{\phi}$ for which the optimal linear classifier, $\mathbf{w}$, is invariant across $e$, where optimality is defined as minimizing the empirical risk $R^e$. We can then expect this representation and classifier to have low risk in new experiments $e$, which have the same causal structure as the training set.
\begin{equation}\label{eq:irm}
\begin{split}
    &\min_{{\begin{subarray}{l}\bm{\phi}: \mathcal{X} \rightarrow \mathbb{R}^d\\
    \mathbf{w} \in \mathbb{R}^d \end{subarray}}} \sum_{e \in \mathcal{E}} R^e(\mathbf{w}^\top \bm{\phi}(X^e)) \\
    &\text{ s.t. } \mathbf{w} \in \underset{\bar{\mathbf{w}} \in \mathbb{R}^d}{\text{arg min }} R^e(\bar{\mathbf{w}}^\top \bm{\phi}(X^e)) \quad \forall e \in \mathcal{E}.
\end{split}
\end{equation}

The IRM objective in \cref{eq:irm} can be thought of as a constrained optimization problem, where the objective is to learn a set of features $\phi$ for which the optimal classifier in each environment is the same. Conditioned on the environments corresponding to different interventions on the data-generating process, this is hypothesized to yield features that only depend on variables that bear a causal relationship to the predicted value. Because the constrained optimization problem is not generally feasible to optimize, \citet{arjovsky2019irm} propose a penalized optimization problem with a schedule on the penalty term as a tractable alternative.

\section{Problem Setup}
\label{sec:problem_setup}
We consider a family of environments $\mathcal{M}_\mathcal{E} = \{(\mathcal{X}_e, \mathcal{A}, \mathcal{R}_e, \mathcal{T}_e, \gamma) | \; e \in \envs\}$, where $\mathcal{E}$ is some index set. For simplicity of notation, we drop the subscript $e$ when referring to the union over all environments $\mathcal{E}$. Our goal is to use a subset $\envs_{\text{train}} \subset \envs$ of these environments to learn a representation $\phi:\states \rightarrow \mathbb{R}^d $ which enables generalization of a learned policy to \textit{every} environment. We denote the number of training environments as $N:=|\envs_{\text{train}}|$. We assume that the environments share some structure, and consider different degrees to which this structure may be shared.

\subsection{The Block MDP}
Block MDPs~\citep{du2019pcid} are described by a tuple $\langle \mathcal{S}, \mathcal{A}, \mathcal{X}, p, q, R \rangle$ with a finite, unobservable state space $\mathcal{S}$, finite action space $\mathcal{A}$, and possibly infinite, but observable space $\mathcal{X}$. $p$ denotes the latent transition distribution $p(s'|s,a)$ for $s,s'\in\mathcal{S}, a\in\mathcal{A}$, $q$ is the (possibly stochastic) emission function that gives the observations from the latent state $q(x|s)$ for $x\in\mathcal{X}, s\in\mathcal{S}$, and $R$ the reward function.  A graphical model of the interactions between the various variables can be found in \cref{fig:irm_model_irrelevant}.
\begin{assumption}[Block structure~\citep{du2019pcid}]
Each observation $x$ uniquely determines its generating state $s$. That is, the observation space $\mathcal{X}$ can be partitioned into disjoint blocks $\mathcal{X}_s$, each containing the support of the conditional distribution $q(\cdot|s)$.
\end{assumption}
This assumption gives us the Markov property in $\mathcal{X}$. 
We translate the block MDP to our multi-environment setting as follows. If a family of environments $\mathcal{M}_\mathcal{E}$ satisfies the block MDP assumption, then each $e \in \mathcal{E}$ corresponds to an emission function $q_e$, with $S,A, \mathcal{X}$ and $p$ shared for all $M \in \mathcal{M}_\mathcal{E}$. We will move the potential randomness from $q_e$ into an auxiliary variable $\eta \in \Omega$, with $\Omega$ some probability space, and write $q_e(\eta, s)$. Further, we require that if $\text{range}(q_e (\cdot, s)) \cap \text{range}(q_{e'}(\cdot, s')) \neq \emptyset$, then $s = s'$. 
The objective is to learn a useful state abstraction to promote generalization across the different emission functions $q_e$, given that only a subset is provided for training. \citet{Song2020Observational} also describes a similar POMDP setting where there is an additional observation function, but assume information can be lost. We note that this problem can be made arbitrarily difficult if each $q_e$ has a disjoint range, but will focus on settings where the $q_e$ overlap in structured ways -- for example, where $q_e$ is the concatenation of the noise and state variables: $q_e(\eta, s) = s \oplus f(\eta)$.

\subsection{Relaxations}
\begin{figure}
    \centering
    \includegraphics[width=0.8\linewidth]{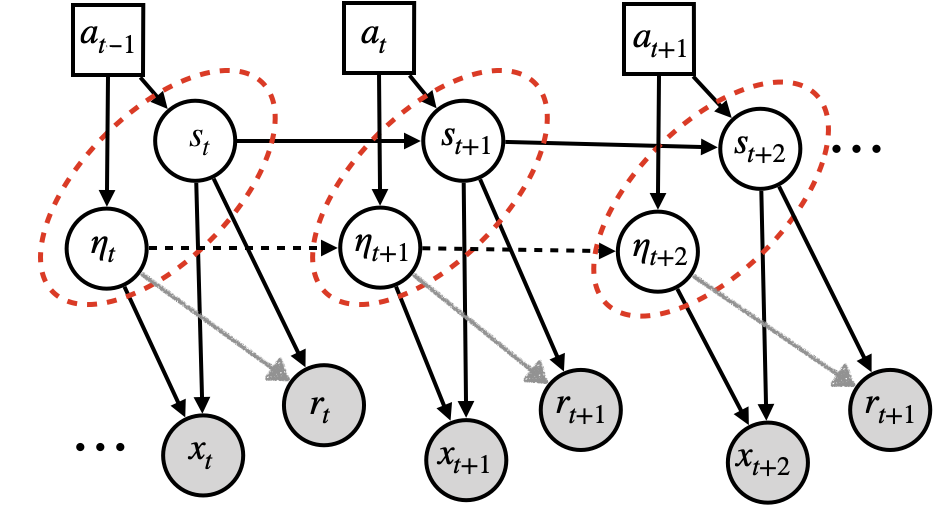}
    \caption{Graphical model of a block MDP with stochastic, correlated observations, with an IRM goal to extract $s$ from the sequence of observations, and discard the spurious noise $\eta$. Red dashed ovals indicate the entire tangled latent state at each timestep. Black dashed lines and grey lines indicate two additional tiers of difficulty to consider.}
    \label{fig:irm_model_irrelevant}
\end{figure}

\textbf{Spurious correlations.} Our initial presentation of the block MDP assumes that the noise variable $\eta$ is sampled randomly at every time step, which prevents multi-timestep correlations (\cref{fig:irm_model_irrelevant} in black, solid lines). We therefore also consider a more realistic \textit{relaxed block MDP}, where spurious variables may have different transition dynamics across the different environments so long as these correlations do not affect the expected reward (\cref{fig:irm_model_irrelevant},  now including black dashed lines). This is equivalent to augmenting each Block MDP in our family with a noise variable $\eta_e$, such that the observation $x = (q_e(\eta_e, s))$, and 
\begin{equation*}p(x'|x, a) = p(q^{-1}(x')|s, a) p_e(\eta_e'|s, \eta_e). \end{equation*}
We note that this section still satisfies Assumption 1.

\textbf{Realizability.} Though our analysis will require Assumption 1, we claim that this is a reasonable requirement as it makes the learning problem realizable. Relaxing Assumption 1 means that the value function learning problem may become ill-posed, as the same observation can map to entirely different states in the latent MDP with different values, making our environment partially observable (a POMDP, \cref{fig:irm_model_irrelevant} with grey lines). We provide a lower bound on the value approximation error attainable in this setting in the appendix (\cref{thm:nonrealizable}).

\subsection{Assumptions on causal structure}
State abstraction and causal inference both aim to eliminate spurious features in a learning algorithm's input. However, these two approaches are applied to drastically different types of problems. Though we demonstrate that causal inference methods can be applied to reinforcement learning, this will require some assumption on how causal mechanisms are observed. Definitions of the notation used in this section are deferred to the appendix, though they are standard in the causal inference literature. 

The key assumption we make is that the variables in the environment state at time $t$ can only affect the values of the state at time $t+1$, and can only affect the reward at time $t$. This assumption allows us to consider the state and action at time $t$ as the only candidate for causal parents of the state at time $t+1$ and of the reward at time $t$. This assumption is crucial to the Markov behaviour of the Markov decision process. We refer the reader to \cref{fig:statgm} to demonstrate how causal graphical models can be translated to this setting.

\begin{assumption}[Temporal Causal Mechanisms]\label{assmpt:causal_mechanisms}
Let $x^1$ and $x^2$ be components of the observation $x$. Then when no intervention is performed on the environment, we have the following independence.
\begin{equation}
     X^1_{t+1} \perp X^2_{t+1} | x_t
\end{equation} 
\end{assumption}
\begin{assumption}[Environment Interventions]\label{assmpt:envs}
Let $\mathcal{X} = X_1 \times \dots \times X_n$, and $\mathcal{S} = X_{i_1} \times \dots X_{i_k}$. Each environment $e \in \mathcal{E}$ corresponds to a do- \citep{pearl2009do} or soft \citep{eberhardt2007interventions} intervention on a single variable $x_i$ in the observation space. 
\end{assumption}

This assumption allows us to use tools from causal inference to identify candidate model-irrelevance state abstractions that may hold across an entire family of MDPs, rather than only the ones observed, based on using the state at one timestep to predict values at the next timestep. In the setting of Assumption 3, we can reconstruct the block MDP emission function $q$ by concatenating the spurious variables from $\mathcal{X} \setminus \mathcal{S}$ to $\mathcal{S}$. We discuss some constraints on interventions necessary to satisfy the block MDP assumption in the appendix.

\section{Connecting State Abstractions to Causal Feature Sets}

Invariant causal prediction aims to identify a set $S$ of causal variables such that a linear predictor with support on $S$ will attain consistent performance over all environments. In other words, ICP removes irrelevant variables from the input, just as state abstractions remove irrelevant information from the environment's observations. An attractive property of the block MDP setting is that it is easy to show that there does exist a model-irrelevance state abstraction $\phi$ for all MDPs in $\mathcal{M}_\mathcal{E}$ -- namely, the function mapping each observation $x$ to its generating latent state $\phi(x) = q^{-1}(x)$. The formalization and proof of this statement are deferred to the appendix (see \cref{thm:existence}). 

We consider whether, under Assumptions 1-3, such a state abstraction can be obtained by ICP. Intuitively, one would then expect that the \textit{causal variables} should have nice properties as a state abstraction. The following result confirms this to be the case; a state abstraction that selects the set of causal variables from the observation space of a block MDP will be a model-irrelevance abstraction for every environment $e \in \mathcal{E}$.

\begin{theorem} 
\label{thm:causalstate_modelirrelevance}
Consider a family of MDPs $M_\mathcal{E} = \{(\mathcal{X}, A, R, P_e, \gamma)|e \in \mathcal{E} \}$, with $\mathcal{X} = \mathbb{R}^k$. Let $M_\mathcal{E}$ satisfy Assumptions 1-3. Let $S_{R} \subseteq \{1, \dots, k\}$ be the set of variables such that the reward $R(x,a)$ is a function only of $[x]_{S_R}$ ($x$ restricted to the indices in $S_R$). Then let $S = \textbf{AN}(R)$ denote the ancestors of $S_R$ in the (fully observable) causal graph corresponding to the transition dynamics of $M_\mathcal{E}$. Then the state abstraction $\phi_S(x) = [x]_S$  is a \textit{model-irrelevance} abstraction for every $e \in \mathcal{E}$. 
\end{theorem}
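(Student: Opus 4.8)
The plan is to verify directly that the level sets of $\phi_S$ form a bisimulation relation inside each MDP $M_e$, since a model-irrelevance abstraction is by definition exactly a bisimulation relation (as noted for the model-irrelevance abstraction of \citet{li2006stateabs}). Writing $x_1 \sim x_2$ whenever $[x_1]_S = [x_2]_S$, I must check the two bisimulation conditions — equality of reward and equality of the induced next-block distribution — for every pair of $\sim$-equivalent observations, every action $a$, and every environment $e \in \mathcal{E}$. The reward condition is immediate: since $S_R \subseteq \mathbf{AN}(R) = S$, two observations with $[x_1]_S = [x_2]_S$ in particular agree on the coordinates indexed by $S_R$, and as $R$ is a function of $[x]_{S_R}$ alone (and shared across environments) we get $R(x_1,a) = R(x_2,a)$.

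The transition condition is the crux, and here I would lean on Assumptions 2 and 3. First, by Assumption 2 the components of the next observation are conditionally independent given the current observation, so the transition kernel factorizes as $P_e(x' \mid x, a) = \prod_{i=1}^k P_e(X^i_{t+1} \mid x, a)$. Combining this with the causal graphical model, each factor depends on $x$ only through the causal parents of coordinate $i$, i.e. $P_e(X^i_{t+1}\mid x,a) = P_e(X^i_{t+1}\mid [x]_{\mathrm{PA}(i)},a)$. The key structural fact is that $S = \mathbf{AN}(R)$ is ancestrally closed: if $i \in S$ then $\mathrm{PA}(i) \subseteq S$. Marginalizing the product over the coordinates outside $S$, the probability of landing in a block $G = \{x' : [x']_S = \bar x'\}$ is $\prod_{i \in S} P_e(X^i_{t+1} = \bar x'_i \mid [x]_{\mathrm{PA}(i)}, a)$, which by ancestral closure is a function of $[x]_S$ alone. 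Hence $P_e(G \mid x_1, a) = P_e(G \mid x_2, a)$ whenever $[x_1]_S = [x_2]_S$, which is exactly the second bisimulation condition.

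To upgrade this to \emph{every} $e \in \mathcal{E}$ I would invoke Assumption 3: each environment is obtained by a single-variable intervention. Because $S$ is ancestrally closed, any intervention on a coordinate outside $S$ leaves the mechanisms $P_e(X^i_{t+1}\mid [x]_{\mathrm{PA}(i)}, a)$ for $i \in S$ unchanged, whereas an intervention on a coordinate inside $S$ only replaces its mechanism by one that still reads off coordinates in $S$ (indeed, a do-intervention makes it constant). In either case the block-transition kernel remains a function of $[x]_S$, so the bisimulation property survives in $M_e$, and together with the environment-independence of $R$ this gives model-irrelevance for all $e$.

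The step I expect to require the most care is pinning down the interpretation of $\mathbf{AN}(R)$ in the temporal/dynamic causal graph and proving the ancestral-closure property $\mathrm{PA}(i)\subseteq S$ for $i\in S$ rigorously, together with checking that the factorization of Assumption 2 is preserved under each intervention of Assumption 3. This is precisely where the ``for every environment'' strength of the claim is earned, rather than merely the within-MDP bisimulation statement.
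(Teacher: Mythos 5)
Your proposal is correct and follows essentially the same route as the paper's proof: both verify the two bisimulation conditions, establish the reward condition from $S_R \subseteq S$, and establish the transition condition by factorizing the kernel via the independent-components assumption and exploiting that $S = \mathbf{AN}(R)$ is closed under taking parents, so the block-transition probability depends only on $[x]_S$. One small note: the paper's appendix explicitly restricts interventions to occur \emph{outside} $\mathbf{AN}(R)$ (to preserve the block MDP structure), so your extra case analysis for interventions on coordinates inside $S$ is not needed under the paper's stated assumptions, though it does no harm.
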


\begin{figure}
    \centering
    \includegraphics[width=0.45\textwidth]{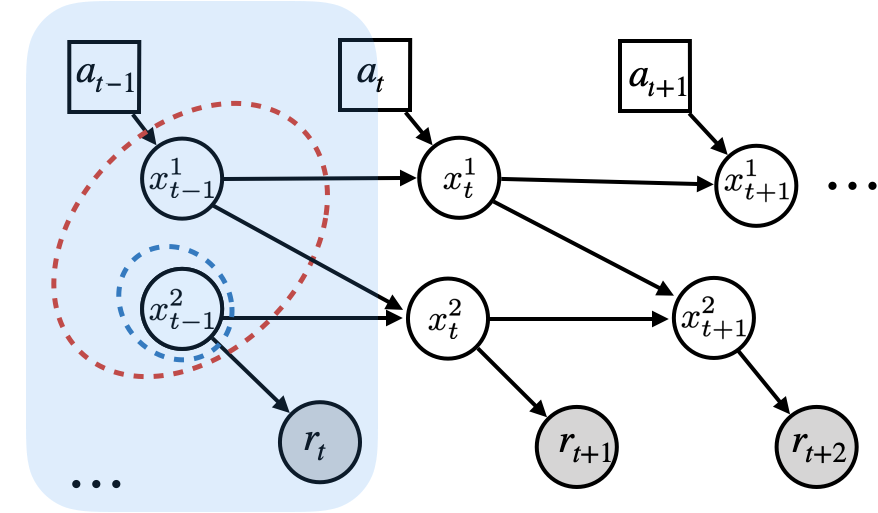}
    \caption{Graphical causal models with temporal dependence -- note that while $x^2$ (circled in blue) is the only causal parent of the reward, because its next-timestep distribution depends on $x^1$, a model-irrelevance state abstraction must include both variables. Shaded in blue: the graphical causal model of an MDP with states $s=(x^1, x^2)$ when ignoring timesteps.}
    \label{fig:statgm}
\end{figure}

An important detail in the previous result is the model irrelevance state abstraction incorporates not just the parents of the reward, but also its ancestors. This is because in RL, we seek to model \textit{return} rather than solely rewards, which requires a state abstraction that can capture multi-timestep interactions. We provide an illustration of this in \cref{fig:statgm}. As a concrete example, we note that in the popular benchmark CartPole, only position $x$ and angle $\theta$ are necessary to predict the reward. However, predicting the return requires $\dot{\theta}$ and $\dot{x}$, their respective velocities.

Learning a minimal $\phi$ in the setting of Theorem 1 using a single training environment may not always be possible. However, applying invariant causal prediction methods in the multi-environment setting will yield the minimal causal set of variables when the training environment interventions satisfy certain conditions necessary for the identifiability of the causal variables \citep{peters2016icp}.

\section{Block MDP Generalization Bounds}
We continue to relax the assumptions needed to learn a causal representation and look to the nonlinear setting. 
As a reminder, the goal of this work is to produce representations that will generalize from the training environments to a novel test environment. 
However, normal PAC generalization bounds require a much larger number of environments than one could expect to obtain in the reinforcement learning setting. The appeal of an invariant representation is that it may allow for theoretical guarantees on learning the right state abstraction with many fewer training environments, as discussed by \citet{peters2016icp}. If the learned state abstraction is close to capturing the true base MDP, then the model error in the test environment can be bounded by a function of the distance of the test environment's abstract state distribution to the training environments'. Though the requirements given in the following \cref{thm:model_error} are difficult to guarantee in practice, the result  will hold for any arbitrary learned state abstraction. 

\begin{theorem}[Model error bound]
\label{thm:model_error}
Consider an MDP $M$, with $M'$ denoting a coarser bisimulation of $M$. Let $\phi$ denote the mapping from states of $M$ to states of $M'$. 
Suppose that the dynamics of $M$ are $L$-Lipschitz w.r.t. $\phi(X)$ and that $T$ is some approximate transition model satisfying $\max_{s} \mathbb{E}\|T(\phi(s)) - \phi(T_M(s)) \| < \delta$, for some $\delta > 0$. Let $W_1(\pi_1, \pi_2)$ denote the 1-Wasserstein distance. Then
\begin{equation}
    \mathbb{E}_{x \sim M'}[\|T(\phi(x)) - \phi(T_{M'}(x)) \|] \leq \delta + 2LW_1(\pi_{\phi(M)}, \pi_{\phi(M')}).
\end{equation}
\end{theorem}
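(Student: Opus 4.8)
The plan is to transfer the model-error bound from the training MDP $M$ to the coarser test MDP $M'$ by exploiting the fact that, under the bisimulation relation, the two environments induce the \emph{same} dynamics on the abstract state space and differ only in how they distribute mass over that space. Write $\bar{T}$ for this shared abstract transition operator, so that bisimilarity of $M$ and $M'$ through $\phi$ gives $\phi(T_M(x)) = \bar{T}(\phi(x))$ and $\phi(T_{M'}(x)) = \bar{T}(\phi(x))$. The target quantity then becomes $\mathbb{E}_{x \sim M'}[\|T(\phi(x)) - \bar{T}(\phi(x))\|]$, which depends on $x$ only through the abstract state $z = \phi(x)$. Defining the error function $g(z) = \|T(z) - \bar{T}(z)\|$, the left-hand side is exactly $\mathbb{E}_{z \sim \pi_{\phi(M')}}[g(z)]$, while the hypothesis $\max_s \mathbb{E}\|T(\phi(s)) - \phi(T_M(s))\| < \delta$ reads as $\mathbb{E}_{z \sim \pi_{\phi(M)}}[g(z)] < \delta$.

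The second step is to control the gap between these two expectations by the distance between the abstract state distributions. First I would show that $g$ is $2L$-Lipschitz: since both the true abstract dynamics $\bar{T}$ and the learned model $T$ are $L$-Lipschitz in the abstract state, the triangle inequality gives $|g(z_1) - g(z_2)| \le \|T(z_1) - T(z_2)\| + \|\bar{T}(z_1) - \bar{T}(z_2)\| \le 2L\|z_1 - z_2\|$, which is precisely where the factor of $2L$ originates. Invoking the Kantorovich--Rubinstein dual characterisation of $W_1$, any $c$-Lipschitz function satisfies $\mathbb{E}_{\pi_{\phi(M')}}[g] - \mathbb{E}_{\pi_{\phi(M)}}[g] \le c\, W_1(\pi_{\phi(M')}, \pi_{\phi(M)})$; equivalently, couple $\pi_{\phi(M')}$ and $\pi_{\phi(M)}$ optimally and apply the Lipschitz bound pointwise under the coupling. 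Combining $\mathbb{E}_{\pi_{\phi(M)}}[g] < \delta$ with the $2L$-Lipschitz transfer yields $\mathbb{E}_{\pi_{\phi(M')}}[g] \le \delta + 2L\, W_1(\pi_{\phi(M)}, \pi_{\phi(M')})$, which is the claim.

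The main obstacle I anticipate is justifying the two ingredients that the transfer step silently relies on. The cleaner of these is the use of bisimulation to collapse $\phi \circ T_M$ and $\phi \circ T_{M'}$ into a single abstract operator $\bar{T}$: this is immediate when $\phi$ recovers the shared latent state, but it needs care when an observation drawn from $M'$ lies outside the observation support of $M$, so that $T_M$ must be interpreted through the common latent dynamics rather than literally on $\mathcal{X}$. The more delicate point is the Lipschitz constant: the statement assumes the dynamics of $M$ are $L$-Lipschitz, and for the $2L$ bound one additionally needs the learned model $T$ to be $L$-Lipschitz in $\phi(X)$, so I would either fold this into the hypothesis or restrict to $T$ in an $L$-Lipschitz class. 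Finally, I would note that reading the training hypothesis as a bound on $\mathbb{E}_{\pi_{\phi(M)}}[g]$ (rather than a uniform bound over all of $\mathcal{S}$, which would make the conclusion vacuous) is essential, and that the Kantorovich duality step is exactly what makes the transfer across the unmatched supports of the two distributions rigorous.
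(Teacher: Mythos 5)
Your proof is correct and follows essentially the same route as the paper's: the paper introduces a reference state $y$ drawn from the training distribution via the optimal $W_1$ coupling, splits the error as $\|T(\phi(x))-T(\phi(y))\|+\|T(\phi(y))-\phi(T_M(y))\|+\|\phi(T_M(y))-\phi(T_M(x))\|$, and bounds the two outer terms by $L\|\phi(x)-\phi(y)\|$ each, which is exactly your Kantorovich--Rubinstein argument applied to the $2L$-Lipschitz error function $g$. Your side remarks are also on target: the paper's proof likewise silently uses that the learned model $T$ is itself $L$-Lipschitz (in the step $\|T(\phi(x))-T(\phi(y))\|\le L\|\phi(x)-\phi(y)\|$) and likewise identifies $\phi\circ T_M$ with $\phi\circ T_{M'}$ via bisimilarity without comment.
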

Proof found in \cref{app:proofs}.

Instead of assuming access to a bisimilar MDP $M'$, we can provide discrepancy bounds for an MDP $\bar{M}$ produced by a learned state representation $\phi(x)$, dynamics function $f_s$, and reward function $R$ using the distance in dynamics $J_D^\infty$ and reward $J_R^\infty$ of $\bar{M}$ to the underlying MDP $M$. We first define these distances,
\begin{equation}
\begin{split}
    J_R^\infty:=\sup_{x\in\mathcal{X},a\in\mathcal{A}}|R(\phi(x),a,\phi(x')) - r(x, a)| \\
    J_D^\infty:=\sup_{x\in\mathcal{X},a\in\mathcal{A}} W_1(f_s(\phi(x),a), \phi P(x,a)).
\end{split}
\end{equation}

\begin{theorem}
\label{thm:value_bounds}
Let $M$ be a block MDP and $\bar{M}$ the learned invariant MDP with a mapping $\phi:\mathcal{X}\mapsto \mathcal{Z}$. For any $L$-Lipschitz valued policy $\pi$ the value difference of that policy is bounded by
\begin{equation}
    |Q^\pi(x,a) - \bar{Q}^\pi (\phi(x),a)|\leq \frac{J_R^\infty + \gamma LJ_D^\infty}{1-\gamma},
\end{equation}
where $Q^\pi$ is the value function for $\pi$ in $M$ and $\bar{Q}^\pi$ is the value function for $\pi$ in $\bar{M}$.
\end{theorem}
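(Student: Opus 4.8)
The plan is to run a simulation-lemma-style fixed-point argument on the Bellman equations for the two value functions and then solve the resulting geometric recursion. Set $\Delta := \sup_{x,a} |Q^\pi(x,a) - \bar{Q}^\pi(\phi(x),a)|$. Since the rewards are bounded and $\gamma < 1$, both $Q^\pi$ and $\bar{Q}^\pi$ are bounded, so $\Delta < \infty$ and the recursion below is well posed. I would write the two backups
\begin{equation*}
Q^\pi(x,a) = r(x,a) + \gamma\, \mathbb{E}_{x' \sim P(x,a),\, a' \sim \pi}[Q^\pi(x',a')],
\end{equation*}
\begin{equation*}
\bar{Q}^\pi(\phi(x),a) = R(\phi(x),a) + \gamma\, \mathbb{E}_{z' \sim f_s(\phi(x),a),\, a' \sim \pi}[\bar{Q}^\pi(z',a')],
\end{equation*}
subtract them, and add and subtract the hybrid term $\gamma\, \mathbb{E}_{x' \sim P(x,a),\, a' \sim \pi}[\bar{Q}^\pi(\phi(x'),a')]$. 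Here I use that $\pi$ is a policy on the abstract state, i.e.\ it factors through $\phi$, so the same $\pi$ appears in both MDPs. This splits the difference into three pieces.

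The first piece is the reward discrepancy $r(x,a) - R(\phi(x),a)$, bounded pointwise by $J_R^\infty$ (if $R$ also depends on the next state, the pointwise bound $|R(\phi(x),a,\phi(x')) - r(x,a)| \le J_R^\infty$ still yields the same bound after taking expectations). The second piece is the recursive term $\gamma\, \mathbb{E}_{x' \sim P(x,a),\, a' \sim \pi}[Q^\pi(x',a') - \bar{Q}^\pi(\phi(x'),a')]$, whose integrand is bounded in absolute value by $\Delta$, so this piece is at most $\gamma \Delta$. The third piece is the transition-mismatch term
\begin{equation*}
\gamma\Big(\mathbb{E}_{x' \sim P(x,a)}[\bar{V}^\pi(\phi(x'))] - \mathbb{E}_{z' \sim f_s(\phi(x),a)}[\bar{V}^\pi(z')]\Big),
\end{equation*}
where $\bar{V}^\pi(z) := \mathbb{E}_{a' \sim \pi}[\bar{Q}^\pi(z,a')]$.

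The crux is the third piece, and this is the step I expect to be the main obstacle: the first expectation must be read as an integral of the single test function $\bar{V}^\pi$ against the pushforward $\phi P(x,a)$ of the true kernel, while the second is the integral of the same function against the model kernel $f_s(\phi(x),a)$, both now measures on the common abstract space $\mathcal{Z}$. Keeping this pushforward-versus-model bookkeeping straight is the delicate part. Because $\pi$ is an $L$-Lipschitz valued policy, $\bar{V}^\pi$ is $L$-Lipschitz, so Kantorovich--Rubinstein duality bounds this difference by $L\, W_1(f_s(\phi(x),a), \phi P(x,a)) \le L J_D^\infty$, and the piece is at most $\gamma L J_D^\infty$.

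Collecting the three bounds and taking the supremum over $(x,a)$ yields $\Delta \le J_R^\infty + \gamma L J_D^\infty + \gamma \Delta$, i.e.\ $(1-\gamma)\Delta \le J_R^\infty + \gamma L J_D^\infty$. Dividing by $1-\gamma$ gives $\Delta \le (J_R^\infty + \gamma L J_D^\infty)/(1-\gamma)$, which is exactly the claimed bound since the left-hand side of the theorem is $\le \Delta$ for every $(x,a)$.
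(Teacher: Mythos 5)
Your proposal is correct and follows essentially the same route as the paper's proof: both decompose the Bellman backup difference into a reward-error term bounded by $J_R^\infty$, a transition-mismatch term bounded by $\gamma L J_D^\infty$ via the Lipschitz property of $\bar{V}^\pi$ and the $W_1$ distance between $\phi P(x,a)$ and $f_s(\phi(x),a)$, and a recursive term of size $\gamma\Delta$, then solve the geometric recursion. Your explicit remark that $\Delta<\infty$ is needed for the recursion to be solvable is a small point of added care that the paper leaves implicit, but it does not change the argument.
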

Proof found in \cref{app:proofs}.
This gives us a bound on generalization performance that depends on the supremum of the dynamics and reward errors, which correspondingly is a regression problem that will depend on $\sum_{e\in\mathcal{E}}n_e$, the number of samples we have in aggregate over all training environments rather than the number of training environments, $|\mathcal{E}|$.  Recent generalization bounds for deep neural networks using Rademacher complexity~\cite{bartlett2017genbounds,arora2018genbounds} scale with a factor of $\frac{1}{\sqrt{n}}$ where $n$ is the number of samples. We can use $n:=\sum_{e\in\mathcal{E}}n_e$ for our setting, getting generalization bounds for the block MDP setting that scale with the number of samples in aggregate over all environments, an improvement over previous multi-task bounds that depend on $|\mathcal{E}|$.

\section{Methods}
Given these theoretical results, we propose two methods to learn invariant representations in the block MDP setting. Both methods take inspiration from invariant causal prediction, with the first being the direct application of linear ICP to select the causal variables in the state in the setting where variables are given. This corresponds to direct feature selection, which with high probability returns the minimal causal feature set. The second method is a gradient-based approach akin to the IRM objective, with no assumption of a linear causal relationship and a learned causal invariant representation. Like the IRM goal (\cref{eq:irm}), we aim to learn an invariant state abstraction from stochastic observations across different interventions $i$, and impose an additional invariance constraint.

\subsection{Variable Selection for Linear Predictors}
The following algorithm (\cref{alg:linear_misa}) returns a model-irrelevance state abstraction. We require the presence of a replay buffer $\mathcal{D}$, in which transitions are stored and tagged with the environment from which they came. The algorithm then applies ICP to find all causal ancestors of the reward iteratively. This approach has the benefit of inheriting many nice properties from ICP -- under suitable identifiability conditions, it will return the exact causal variable set to a specified degree of confidence. 

It also inherits inconvenient properties: the ICP algorithm is exponential in the number of variables, and so this method is not efficient for high-dimensional observation spaces. We are also restricted to considering linear relationships of the observation to the reward and next state. Further, because we take the union over iterative applications of ICP, the confidence parameter $\alpha$ used in each call must be adjusted accordingly. Given $n$ observation variables, we give a conservative bound of $\frac{\alpha}{n}$.

\begin{algorithm}[t]
\SetAlgoLined
\KwResult{$S \subset \{1, \dots, k\}$, the causal state variables}
\textbf{Input:} $\alpha$, a confidence parameter, $\mathcal{D}$, an replay buffer with observations $\mathcal{X}$. \; \\
 $S \gets \emptyset$\; \\
 stack $\gets$ r \\
 \While{stack is not empty}{
  $v$ = stack.pop() \\
  \If{$v \not \in S$}{
  $S' \gets$ \texttt{ICP}(v, $\mathcal{D}$, $\frac{\alpha}{\text{dim}(\mathcal{X})}$)  \\
  $S \gets S \; \cup S'$ \\
   stack.push($S'$)}
 
 }
\caption{Linear MISA: Model-irrelevance State Abstractions}
\label{alg:linear_misa}
\end{algorithm}

\subsection{Learning a Model-irrelevance State Abstraction}

\label{sec:model_irrelevant}

We design an objective to learn a dynamics preserving state abstraction $\mathcal{Z}$, or model-irrelevance abstraction~\citep{li2006stateabs}, where the similarity of the model is bounded by the model error in the environment setting shown in \cref{fig:irm_model_irrelevant}. This requires disentangling the state space into a minimal representation that causes reward $s_t:=\phi(x_t)$ and everything else $\eta_t:=\varphi(x_t)$. Our algorithm proceeds as follows. 

We assume the existence of an invariant state embedding, whose mapping function we denote by $\phi:\mathcal{X}\mapsto\mathcal{Z}$. We also assume an invariant dynamics model $f_s:\mathcal{A}\times \mathcal{Z}\mapsto\mathcal{Z}$, a task-specific dynamics model $f_\eta:\mathcal{A}\times \mathcal{H}\mapsto\mathcal{H}$, and an invariant reward model $r:\mathcal{Z}\times\mathcal{A}\times\mathcal{Z}\mapsto\mathbb{R}$ in the embedding space. To incorporate a meaningful objective and ground the learned representation, we need a decoder $\phi^{-1}:\mathcal{Z}\times\mathcal{H}\mapsto\mathcal{X}$. We assume $N>1$ training environments are given. 
The overall dynamics and reward objectives become
\begin{align*}
    J_D(\phi,\psi, f_s,f_\eta)=\sum_i \mathbb{E}_{\pi_{b_i}}\big[&(\phi^{-1} (f_s(a,\phi(x_i)), \\
    &f_\eta(a,\psi(x_i))) - x_i')^2\big],
\end{align*}
\begin{equation*}
    J_R(\phi,R)=\sum_i \mathbb{E}_{\pi_{b_i}}\big[(R(\phi(x_i), a,\phi(x_i')) - r_i')^2\big],
\end{equation*}
under data collected from behavioral policies $\pi_{b_i}$ for each experimental setting. 

Of course, this does not guarantee that the representation learned by $\phi$ will be minimal, so we incorporate additional regularization as an incentive. We train a task classifier on the shared latent representation $C:\mathcal{Z}\mapsto [0,1]^N$ with cross-entropy loss and employ an adversarial loss~\cite{tzeng2017ada} on $\phi$ to maximize the entropy of the classifier output to ensure task specific information is not passing through to $\mathcal{Z}$.

This gives us a final objective
\begin{equation}
\begin{split}
    &J_{\text{ALL}}(\phi, \psi, f_s, f_\eta, r) = \\ &J_D(\phi,\psi, f_s,f_\eta) + \alpha_R J_R(\phi,r) - \alpha_C H(C(\phi)),
\end{split}
\end{equation}
where $\alpha_R$ and $\alpha_C$ are hyperparameters and $H$ denotes entropy (\cref{alg:nonlinear_misa}). 

\begin{algorithm}[t]
\SetCustomAlgoRuledWidth{0.6\textwidth}
\SetAlgoLined
\KwResult{$\phi$, an invariant state encoder}
 $\pi \gets \pi_0$\; \\
 $\phi, f_s \gets \phi_0, f_{s,0}$ \; \\
 $\psi^e, f_\eta^e \gets \psi^e_0,f_{\eta,0}^e $ for $e \in \mathcal{E}$ \; \\
 $\mathcal{D}_e \gets \emptyset$ for $e \in \mathcal{E}$ \; \\
 \While{forever}
 {
    \For{$e \in \mathcal{E}$}
    {
        $a \gets \pi(x_e)$\; \\
        $x'_e, r \gets $ \texttt{step}$(x_e,a)$ \; \\
        \texttt{store}$(x_e,a,r,x'_e)$ \; \\
    }
    \For{$e \in \mathcal{E}$}
    {
    Sample batch $X_e$ from $\mathcal{D}_e$ \; \\
    $f_\eta^e,\psi^e \gets \nabla_{f_{\eta}^e,\psi^e} [J_D(X_e) ]$ \; \\
    }
    $f_s, \phi, r \gets \sum_{X_e}\nabla_{f_s,\phi} [J_{\text{ALL}}(X_e) ]$\; \\
    $C \gets \nabla_C$ \texttt{CE\_loss}$(C(\phi(\{x_e\}_{e\in\mathcal{E}}), \{e\}_{e\in\mathcal{E}})$ \; \\

 }
 \caption{Nonlinear Model-irrelevance State Abstraction (MISA) Learning}
 \label{alg:nonlinear_misa}
\end{algorithm}

\section{Results}
We evaluate both linear and non-linear versions of MISA, in corresponding Block MDP settings with both linear and non-linear dynamics. First, we examine model error in environments with low-dimensional (\cref{sec:model_linear}) and high-dimensional (\cref{sec:model_nonlinear}) observations and demonstrate the ability for MISA to zero-shot generalize to unseen environments. We next look to imitation learning in a rich observation setting (\cref{sec:imitation_learning}) and show non-linear MISA generalize to new camera angles. Finally, we explore end-to-end reinforcement learning in the low-dimensional observation setting with correlated noise (\cref{sec:reinforcement_learning}) and again show generalization capabilities where single task and multi-task methods fail.

\subsection{Model Learning}
\subsubsection{Linear Setting}
\label{sec:model_linear}
We first evaluate the linear MISA algorithm in \cref{alg:linear_misa}. To empirically evaluate whether eliminating spurious variables from a representation is necessary to guarantee generalization, we consider a simple family of MDPs with state space $\mathcal{X} = \{ (x_1, x_2, x_3)\}$, with a transition dynamics structure such that $x_1^{t+1} = x_1^t + \epsilon_1^e$, $x_2^{t+1} = x_2^t + \epsilon_2^e$, and $x_3^{t+1} = x_2^t + \epsilon_3^e$. We train on 3 environments with soft interventions on each noise variable. We then run the linear MISA algorithm on batch data from these 3 environments to get a state abstraction $\phi(x) = \{x_1, x_2\}$, and then train 2 linear predictors on $\phi(x)$ and $x$. We then evaluate the \textit{generalization performance} for novel environments that correspond to different hard interventions on the value of the $x_3$ variable. We observe that the predictor trained on $\phi(x)$ attains zero generalization error because it zeros out $x_3$ automatically. However, any nonzero weight on $x_3$ in the least-squares predictor will lead to arbitrarily large generalization error, which is precisely what we observe in Figure \ref{fig:icp_result}.

\begin{figure}[h]
    \centering
        \vspace{-10pt}
    \includegraphics[width=0.4\textwidth]{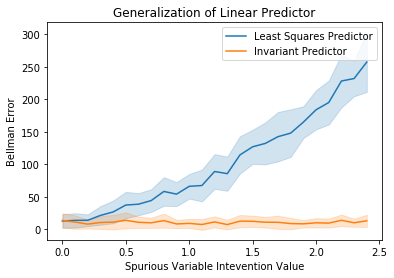}
        \vspace{-10pt}
    \caption{The presence of spurious \textit{uncorrelated} variables in the state can still lead to poor generalization of linear function approximation methods. Invariant Causal Prediction methods can eliminate these spurious variables altogether.}
    \label{fig:icp_result}
\end{figure}

\subsubsection{Rich Observation Setting}
\label{sec:model_nonlinear}
We next test the gradient-based MISA method (\cref{alg:nonlinear_misa}) in a setting with nonlinear  dynamics and rich observations. Instead of having access to observation variables and selecting the minimal causal feature set, we are tasked with learning the invariant causal representation. We randomly initialize the background color of two train environments from Deepmind Control~\citep{deepmindcontrolsuite2018} from range $[0, 255]$. We also randomly initialize another two backgrounds for evaluation. The orange line in \cref{fig:cheetah_run} shows performance on the evaluation environments in comparison to three baselines. In the first, we only train on a single environment and test on another with our method, (\texttt{MISA - 1 env}). Without more than a single experiment to observe at training time, there is no way to disentangle what is causal in terms of dynamics, and what is not. In the second baseline, we combine data from the two environments and train a model over all data (\texttt{Baseline - 1 decoder}). The third is another invariance-based method which uses a gradient penalty, IRM~\cite{arjovsky2019irm}. In the second case the error is tempered by seeing variance in the two environments at training time, but it is not as effective as MISA with two environments at disentangling what is invariant, and therefore causal with respect to dynamics, and what is not. With IRM, the loss starts much higher but very slowly decreases, and we find it is very brittle to tune in practice. Implementation details found in \cref{app:model_nonlinear_implementation}.

\begin{figure}[h]
    \centering
    \includegraphics[width=.8\linewidth]{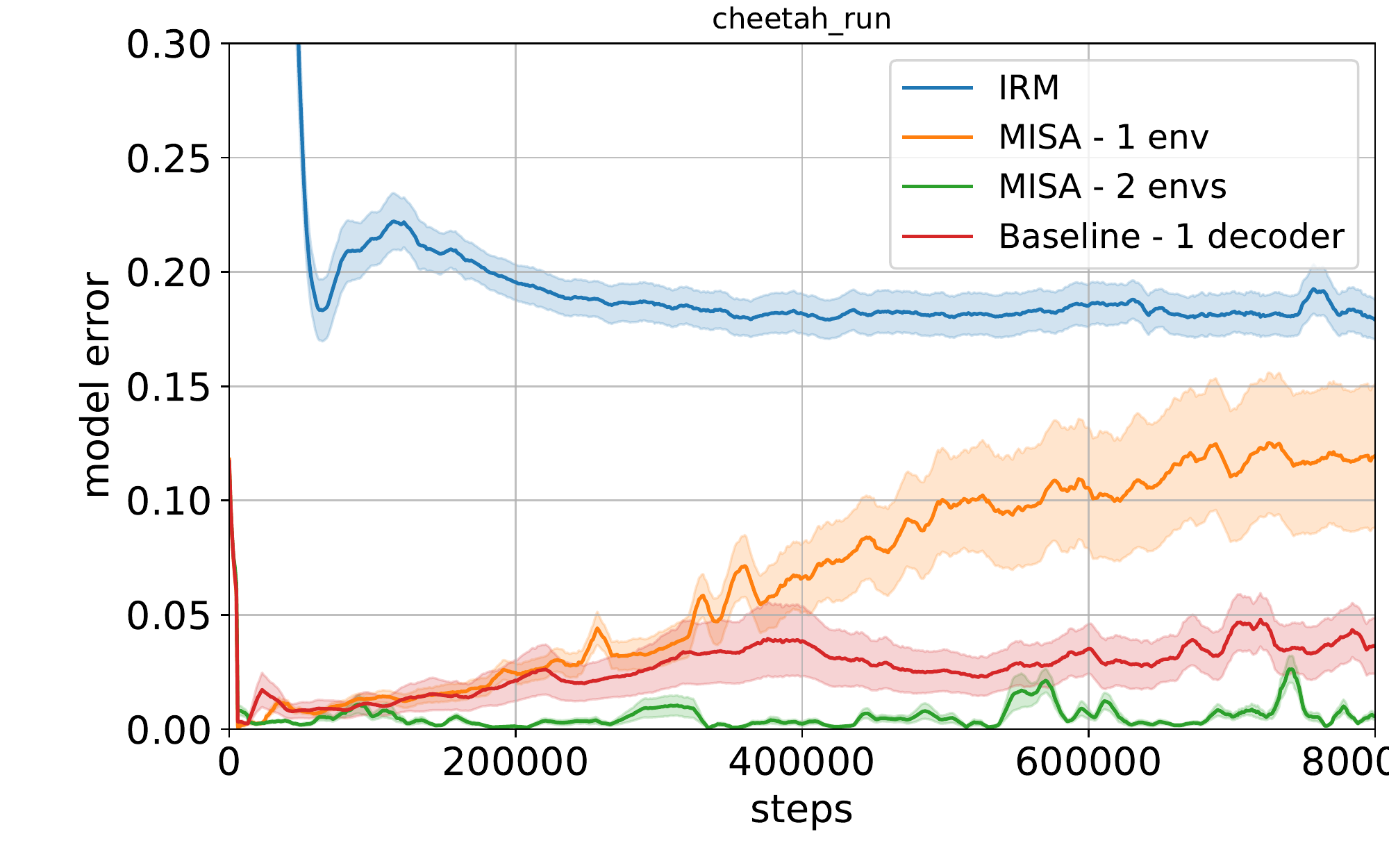}
    \vspace{-10pt}
    \caption{Model error on evaluation environments on Cheetah Run from Deepmind Control. 10 seeds, with one standard error shaded.}
    \label{fig:cheetah_run}
    \vspace{-5pt}
\end{figure}

\subsection{Imitation Learning}
\label{sec:imitation_learning}
In this setup, we first train an expert policy using the proprioceptive state of Cheetah Run from \cite{deepmindcontrolsuite2018}. We then use this policy to collect a dataset for imitation learning in each of two training environments. When rendering these low dimensional images, we alter the camera angles in the different environments (\cref{fig:imitation_learning_envs}). We report the generalization performance as the test error when predicting actions in \cref{fig:imitation_learning}. While we see test error does increase with our method, MISA, the error growth is significantly slower compared to single task and multi-task baselines. 

\begin{figure}[h]
    \centering
    \includegraphics[width=0.08\textwidth]{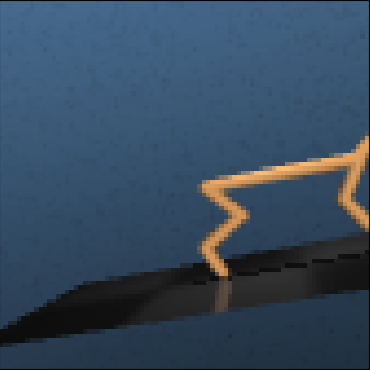} \hspace{20pt}
    \includegraphics[width=0.08\textwidth]{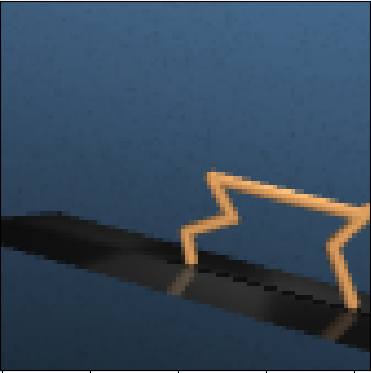} \hspace{20pt}
    \includegraphics[width=0.08\textwidth]{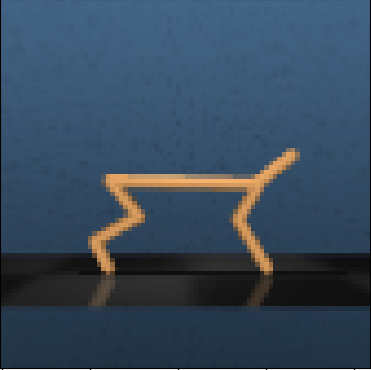}
        \vspace{-10pt}
    \caption{The Cheetah Run environment from Deepmind Control with different camera angles. The first two images are from the training environments and the last image is from evaluation environment.}
    \label{fig:imitation_learning_envs}
\end{figure}

\begin{figure}[h]
    \centering
    \vspace{-5pt}
    \includegraphics[width=0.4\textwidth]{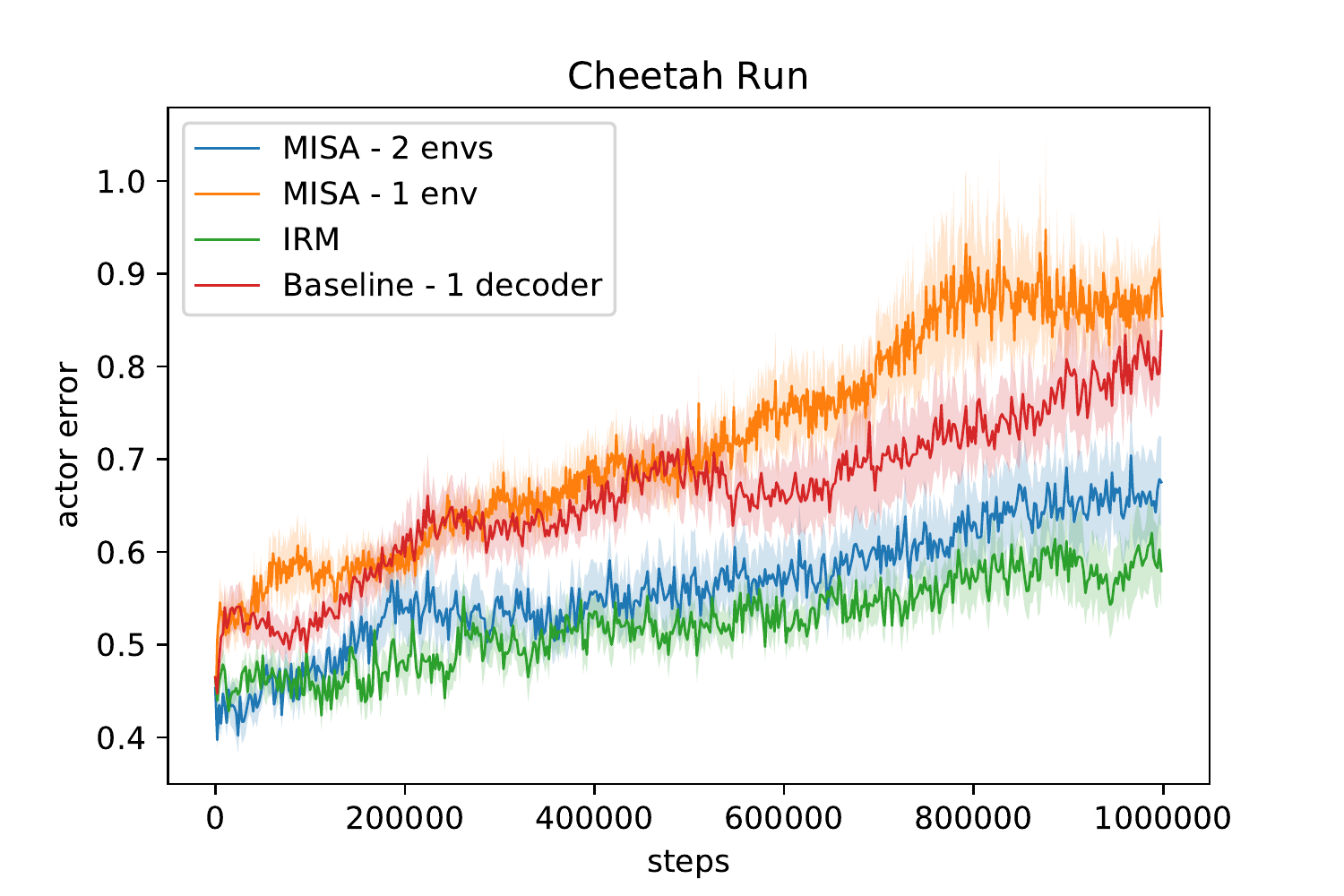}
        \vspace{-10pt}
    \caption{Actor error on evaluation environments on Cheetah Run from Deepmind Control. 10 seeds, with one standard error shaded.}
    \vspace{-5pt}
    \label{fig:imitation_learning}
\end{figure}

\subsection{Reinforcement Learning}
\label{sec:reinforcement_learning}
We go back to the proprioceptive state in the \texttt{cartpole\_swingup} environment in Deepmind Control~\cite{deepmindcontrolsuite2018} to show that we can learn MISA while training a policy. We use Soft Actor Critic~\cite{haarnoja2018sac} with an additional linear encoder, and add spurious correlated dimensions which are a multiplicative factor of the original state space. We also add an additional environment identifier to the observation. This multiplicative factor varies across environments, and we train on two environments with $1\times$ and $2\times$, and test on $3\times$.  Like \citet{arjovsky2019irm}, we also incorporate noise on the causal state to make the task harder, specifically Gaussian noise $\mathcal{N}(0, 0.01)$ to the true state dimension. This incentivizes the agent to attend to the spuriously correlated dimension instead, which has no noise. In \cref{fig:cartpole_swingup_rl} we see the generalization gap drastically improve with our method in comparison to training SAC with data over all environments in aggregate and with IRM~\cite{arjovsky2019irm} implemented on the critic loss. Implementation details and more information about Soft Actor Critic can be found in \cref{app:rl_implementation}.

\begin{figure}[h]
    \centering
    \includegraphics[width=0.4\textwidth]{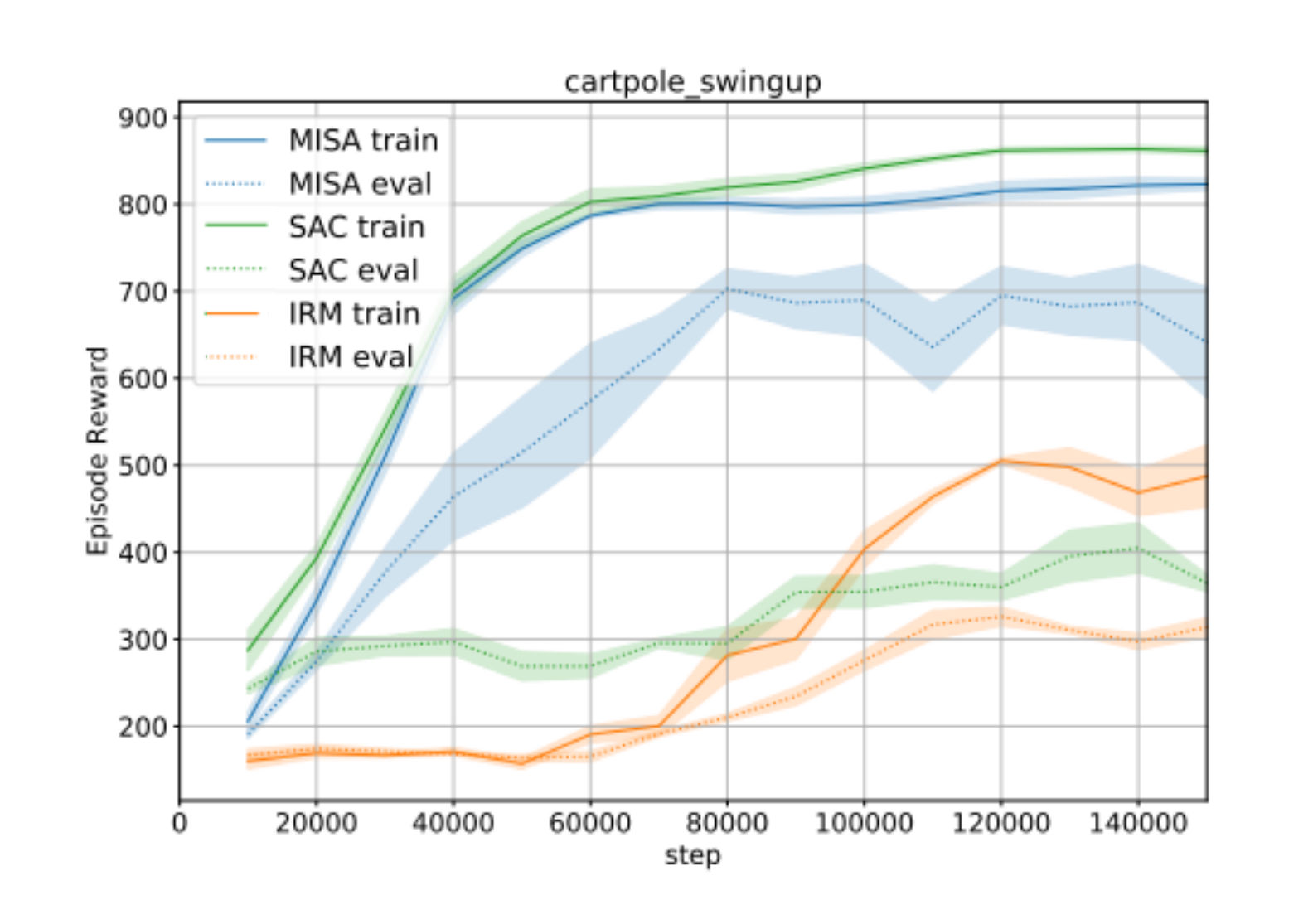}
    \caption{Generalization gap in SAC performance with 2 training environments on \texttt{cartpole\_swingup} from DMC. Evaluated with 10 seeds, standard error shaded.}
    \label{fig:cartpole_swingup_rl}
\end{figure}
\section{Related Work}

\subsection{Prior Work on Generalization Bounds}
Generalization bounds provide guarantees on the test set error attained by an algorithm. Most of these bounds are probabilistic and targeted at the supervised setting, falling into the PAC (Probably Approximately Correct) framework.
PAC bounds give probabilistic guarantees on a model's true error as a function of its train set error and the complexity of the function class encoded by the model. Many measures of hypothesis class complexity exist: the Vapnik-Chernovenkis (VC) dimension \cite{vapnik71uniform}, the Lipschitz constant, and classification margin of a neural network \cite{bartlett2017spectrally}, and second-order properties of the loss landscape \cite{neyshabur2018the} are just a few of many.

Analogous techniques can be applied to Bayesian methods, giving rise to PAC-Bayes bounds~\cite{mcallester1999pac}. This family of bounds can be optimized to yield non-vacuous bounds on the test error of over-parametrized neural networks \cite{dziugaitecomputing}, and have demonstrated strong empirical correlation with model generalization \citep{jiang*2020fantastic}. More recently, \citet{amit2018mlpacbayes,yin2019meta} introduce a PAC-Bayes bound for the multi-task setting dependent on the number of tasks seen at training time. 

\citet{strehl2006pac} extend PAC framework to reinforcement learning, defining a new class of bounds called PAC-MDP. An algorithm is PAC-MDP if for any $\epsilon$ and $\delta$, the sample complexity of the algorithm is less than some polynomial in $(S,A,1/\epsilon,1/\delta,1/(1-\gamma))$ with probability at least $1-\delta$. The authors provide a PAC-MDP algorithm for model-free Q-learning.
\citet{lattimore2012pac} offers lower and upper bounds on the sample complexity of learning near-optimal behavior in MDPs by modifying the Upper Confidence RL (UCRL) algorithm~\cite{jaksch2010ucrl}.

\subsection{Multi-Task Reinforcement Learning}
\citet{teh2017distral,borsa2016mtrl} handle multi-task reinforcement learning with a shared ``distilled" policy~\cite{teh2017distral} and shared state-action representation~\cite{borsa2016mtrl} to capture common or invariant behavior across all tasks. No assumptions are made about how these tasks relate to each other other than a shared state and action space. 

\citet{D'Eramo2020Sharing} show the benefits of learning a shared representation in multi-task settings with an approximate value iteration bound and \citet{brunskill2013mtrl} also demonstrate a PAC-MDP algorithm with improved sample efficiency bounds through transfer across similar tasks. Again, none of these works look to the multi-environment setting to explicitly leverage environment structure.
\citet{barreto2017successorfeatures} exploit successor features for transfer, making the assumption that the dynamics across tasks are the same, but the reward changes. However, they do not handle the setting where states are latent, and observations change.

\section{Discussion}
This work has demonstrated that given certain assumptions, we can use causal inference methods in reinforcement learning to learn an invariant causal representation that generalizes across environments with a shared causal structure. We have provided a framework for defining families of environments, and methods, for both the low dimensional linear value function approximation setting and the deep RL setting, which leverage invariant prediction to extract a causal representation of the state. We have further provided error bounds and identifiability results for these representations. We see this paper as a first step towards the more significant problem of learning useful representations for generalization across a broader class of environments. Some examples of potential applications include third-person imitation learning, sim2real transfer, and, related to sim2real transfer, the use of privileged information in one task (the simulation) as grounding and generalization to new observation spaces ~\cite{salter2019attention}.

\section{Acknowledgements}
MK has received funding from the European Research Council (ERC)
under the European Union’s Horizon 2020 research and innovation programme
(grant agreement No.~834115). The authors would also like to thank Marlos Machado for helpful feedback in the writing process.
\bibliographystyle{apalike}
\bibliography{ref}

\newpage
\appendix
\onecolumn
\section{Notation}
We provide a summary of key notation used throughout the paper here.
\begin{align*}
    \textbf{PA}_\mathcal{G}(X) &: \text{ the parents of node $X$ in the causal graph $\mathcal{G}$. When $\mathcal{G}$ is clear from the setting, abbreviate this notation to $\textbf{PA}(X)$. }\\
    \textbf{AN}_\mathcal{G}(X)&: \text{ the ancestors of node $X$ in $\mathcal{G}$ (again, $\mathcal{G}$ omitted when unambiguous).}\\
    [x]_S &: [x_{i_1}, \dots, x_{i_k} | i_j \in S] \\
    \pi_M &: \text{ the stationary distribution given by some fixed policy in an MDP $M$.}\\
    q &: \text{ the emission function of a block MDP.} \\
    \mathcal{E} &: \text{ a set of environments.}
\end{align*}
\section{Proofs}

\textbf{Technical notes and assumptions.} In order for the block MDP assumption to be satisfied, we will require that the interventions defining each environment only occur outside of the causal ancestors of the reward. Otherwise, the different environments will have different latent state dynamics, which violates our assumption that the environments are obtained by an noisy emission function from the latent state space $\mathcal{S}$. Although ICP will still find the correct causal variables in this setting, this state abstraction will no longer be a model irrelevance state abstraction over the union of the environments. 

\label{app:proofs}
\begin{theoremnum}[\ref{thm:causalstate_modelirrelevance}]
Consider a family of MDPs $M_\mathcal{E} = \{(\mathcal{X}, A, R, P_e, \gamma)|e \in \mathcal{E} \}$, with $\mathcal{X} = \mathbb{R}^k$. Let $M_\mathcal{E}$ satisfy Assumptions 1-3. Let $S_{R} \subseteq \{1, \dots, k\}$ be the set of variables such that the reward $R(x,a)$ is a function only of $[x]_{S_R}$ ($x$ restricted to the indices in $S_R$). Then let $S = \textbf{AN}(R)$ denote the ancestors of $S_R$ in the (fully observable) causal graph corresponding to the transition dynamics of $M_\mathcal{E}$. Then the state abstraction $\phi_S(x) = [x]_S$  is a \textit{model-irrelevance} abstraction for every $e \in \mathcal{E}$. 
\end{theoremnum}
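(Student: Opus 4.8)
The plan is to verify that $\phi_S$ satisfies the two defining conditions of a model-irrelevance abstraction on each $M_e$ individually; since the paper has already noted that a model-irrelevance abstraction is precisely a bisimulation, these are reward invariance and invariance of the pushforward of the transition kernel onto the abstract next-state space. The structural fact I would establish first is that $S = \textbf{AN}(R)$ is closed under taking causal parents: if $i \in S$ then $\textbf{PA}(i) \subseteq S$, because a parent of an ancestor of the reward is itself an ancestor of the reward. Since $S_R$ consists of the direct causal parents of the reward node, $S_R \subseteq S$ as well, and the reward condition follows immediately: as $R(x,a)$ depends only on $[x]_{S_R}$, any two observations $x_1, x_2$ with $[x_1]_S = [x_2]_S$ agree on $[\cdot]_{S_R}$, so $R(x_1,a) = R(x_2,a)$ for every $a$.

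For the transition condition I would invoke \cref{assmpt:causal_mechanisms} to factorize the one-step kernel over components, writing $P_e(x' \mid x, a) = \prod_{i=1}^{k} P_e\big((x')^i \mid [x]_{\textbf{PA}(i)}, a\big)$, where the conditional independence of the next-timestep components given $x_t$ is exactly what lets each factor depend on $x$ only through the parents of component $i$. The pushforward of this kernel through $\phi_S$ is obtained by marginalizing out the components $(x')^i$ with $i \notin S$; each such marginal integrates to one, leaving $P_e([x']_S \mid x, a) = \prod_{i \in S} P_e\big((x')^i \mid [x]_{\textbf{PA}(i)}, a\big)$. By the parent-closure property every surviving factor depends on $x$ only through $[x]_S$, so the abstract next-state distribution is a function of $[x]_S$ alone; hence $[x_1]_S = [x_2]_S$ forces equal distributions over abstract next states, which is the second bisimulation condition.

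Finally I would address the environments. By \cref{assmpt:envs} each $e$ corresponds to a single-variable do- or soft intervention, which alters at most one conditional factor in the product above; as in the technical remarks, restricting interventions to variables outside $\textbf{AN}(R)$ ensures the factors indexed by $S$ remain the shared, unintervened mechanisms, so the argument carries over verbatim for every $e \in \mathcal{E}$. I expect the main obstacle to be the measure-theoretic bookkeeping of the pushforward/marginalization step on the continuous space $\mathcal{X} = \mathbb{R}^k$, together with translating the conditional-independence statement of \cref{assmpt:causal_mechanisms} into the clean component-wise factorization that makes parent-closure bite; by comparison, the combinatorial closure property and the reward condition are routine.
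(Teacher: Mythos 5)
Your proposal is correct and follows essentially the same route as the paper's proof: both arguments hinge on the parent-closure of $\textbf{AN}(R)$ together with the factorization of the transition kernel afforded by the independent-noise assumption, so that marginalizing out the $S^C$ components (whose factors integrate to one) leaves an abstract next-state distribution depending on $x$ only through $[x]_S$, with the restriction of interventions to variables outside $\textbf{AN}(R)$ making the argument uniform over $e \in \mathcal{E}$. Your component-wise product is just a finer-grained version of the paper's two-block decomposition $P(x'|x) = f([x']_S,[x]_S)\,P([x']_{S^C}|x)$, and your reward argument is the deterministic special case of the paper's.
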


\begin{proof}
To prove that $\phi_S$ is a model-irrelevance abstraction, we must first show that $r(x) = r(x')$ for any $x, x': \phi_S(x) = \phi_S(x')$. For this, we note that $\mathbb{E}[R(x)] = \int_{r \in \mathbb{R}} r dp(r|x) = \int_{r \in \mathbb{R}} r dp(r|[x]_S, [x]_{S^C})$ and, because by definition $S^C \subset \textbf{PA}(R)^C$, we have that $R \indep [x]_{S^C}$. Therefore, 
\begin{equation}
    \mathbb{E}[R(x)] = \int_{r \in \mathbb{R}} r dp(r|[x]_S) = \int_{r \in \mathbb{R}} r dp(r|[x']_S) = \mathbb{E}[R(x')].
\end{equation}

To show that $[x]_S$ is a MISA, we must also show that for any $x_1, x_2 $ such that $\phi(x_1) = \phi(x_2)$, and for any $e \in \envs$, the distribution over next state equivalence classes will be equal for $x_1$ and $x_2$.
$$\sum_{x' \in \phi^{-1}(\bar{X})} P^e_{x_1x'} = \sum_{x' \in \phi^{-1}(\bar{X})} P^e_{x_2x'}.$$  

For this, it suffices to observe that $S$ is closed under taking parents in the causal graph, and that by construction environments only contain interventions on variables outside of the causal set.  Specifically, we observe that the probability of seeing any particular equivalence class $[x']_S$ after state $x$ is only a function of $[x]_S$.
\begin{align*}
    P([x']_S | x) &= f([x]_S, [x']_S)
    \intertext{This allows us to define a natural decomposition of the transition function as follows.}
    P(x' | x) &= P\bigg ( [x]_S \oplus [x]_{S^C} \bigg| [x']_S \oplus [x']_{S^C} \bigg) \text{ which by the independent noise assumption gives}\\
    P(x'|x) &=f([x']_S, [x]_S) P([x']_{S^c}|x)
\end{align*} 
We further observe that since the components of $x$ are independent, $\sum_{[x']_{S^C}} P([x']_{S^C}|x) = 1$.
We now return to the property we want to show:
\begin{align*}
    \sum_{x' \in \phi^{-1}(\bar{x})} P^e_{x_1x'} &= \sum_{x' \in \phi^{-1}(\bar{x})} f([x_1]_S, [x']_S) P(x' | x_1 ) \\
    &= f(\phi(x_1), \bar{x})\sum_{[x']_{S^C}} P\bigg([x']_{S^C} \bigg | x_1 \bigg)\\
    &= f(\phi(x_1), \bar{x}) \\
\intertext{and because $\phi(x_1) = \phi(x_2)$, we have}
    &= f(\phi(x_2), \bar{x}) \\
\intertext{for which we can apply the previous chain of equalities backward to obtain}
&= \sum_{x' \in \phi^{-1}(\bar{x})} P^e_{x_2x'}  \\
\end{align*}

\end{proof}

\begin{proposition}[Identifiability and Uniqueness of Causal State Abstraction]\label{prop:identifiability}
In the setting of the previous theorem, assume the transition dynamics and reward are linear functions of the current state. If the training environment set $\mathcal{E}_{\text{train}}$ satisfies any of the conditions of Theorem 2 \citep{peters2016icp} with respect to each variable in \textbf{AN}(R), then the causal feature set $\phi_S$ is identifiable. Conversely, if the training environments don't contain sufficient interventions, then it may be that there exists a $\phi$ such that $\phi$ is a model irrelevance abstraction over $\mathcal{E}_{\text{train}}$, but not over $\mathcal{E}$ globally.
\end{proposition}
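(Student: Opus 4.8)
The argument splits into a forward (identifiability) direction and a converse (non\nobreakdash-uniqueness) direction, which I would handle separately.

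\textbf{Forward direction.} The plan is to reduce recovery of $S = \textbf{AN}(R)$ to a finite sequence of single\nobreakdash-target ICP problems and invoke Theorem 2 of \citep{peters2016icp} once per target. First I would use Assumption 2 to recast the prediction of each next\nobreakdash-timestep variable $x^i_{t+1}$ from the current observation $x_t$ as a structural regression, where the training environments $e\in\mathcal{E}_{\text{train}}$ play the role of the distinct experimental settings demanded by ICP and Assumption 3 guarantees these settings are single\nobreakdash-variable interventions; linearity of the dynamics and reward places us exactly in the regime where ICP is consistent. I would then argue by induction along the traversal performed by \cref{alg:linear_misa}. For the base target $R$, \cref{thm:causalstate_modelirrelevance} shows $R \indep [x]_{S_R^C}$ given $[x]_{S_R}$, so the true parent set $S_R=\textbf{PA}(R)$ already renders the conditional invariant across environments; the hypothesis that $\mathcal{E}_{\text{train}}$ satisfies a Peters condition for $R$ then forces ICP to output precisely $S_R$. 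For the inductive step, each variable $v$ popped from the stack lies in $\textbf{AN}(R)$, and the same theorem applied to $v$ yields $\textbf{PA}(v)$ exactly; pushing these parents and recursing, the accumulated set is the union of the $k$\nobreakdash-step parents of $S_R$, which is exactly $\textbf{AN}(R)=S$. To make the guarantee uniform I would apply a union bound over the at most $\dim(\mathcal{X})$ calls to ICP, each run at level $\alpha/\dim(\mathcal{X})$, so that with probability at least $1-\alpha$ every parent set is recovered correctly and $S$ is the unique consistent output; this is the sense in which $\phi_S$ is identifiable.

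\textbf{Converse direction.} Here I would exhibit an explicit family witnessing non\nobreakdash-uniqueness. Take an observation $x=(x^1,x^2)$ whose two coordinates both coincide with a single latent coordinate $s^1$ in every training environment (a redundantly observed state), let the reward depend only on $s^1$, so that $S=\textbf{AN}(R)=\{1\}$ and $\phi_S(x)=[x]_{\{1\}}$. Choose $\mathcal{E}_{\text{train}}$ to contain no intervention that decorrelates $x^2$ from $s^1$, so $x^1_t=x^2_t=s^1_t$ holds throughout training. I would then check that $\phi'(x)=[x]_{\{2\}}$ also satisfies both model\nobreakdash-irrelevance conditions over $\mathcal{E}_{\text{train}}$: since $x^2=s^1$, the reward and the induced distribution over next equivalence classes are the same functions of $x^2$ as of $x^1$, so the argument of \cref{thm:causalstate_modelirrelevance} applies verbatim with $\{2\}$ in place of $\{1\}$. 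Finally I would pick a test environment $e^\star\in\mathcal{E}\setminus\mathcal{E}_{\text{train}}$ performing a hard intervention $\mathrm{do}(x^2=c)$ on the single observation variable $x^2$ (permitted by Assumption 3), severing $x^2=s^1$; in $e^\star$ the reward is no longer a function of $\phi'(x)$, so $\phi'$ fails to be model\nobreakdash-irrelevant over $\mathcal{E}$. The pair $\phi_S,\phi'$ then agrees on $\mathcal{E}_{\text{train}}$ yet disagrees globally, which is precisely the asserted failure of identifiability.

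\textbf{Main obstacle.} The delicate part is the forward recursion: Peters Theorem 2 is stated for a single prediction target, whereas I need simultaneous identifiability of every variable in $\textbf{AN}(R)$, so I must verify that the per\nobreakdash-target intervention conditions genuinely compose — that no spurious variable is admitted into any intermediate parent set and no genuine ancestor is dropped before the traversal terminates. Bookkeeping the confidence level across the data\nobreakdash-dependent number of ICP invocations is the secondary technical point, which the $\alpha/\dim(\mathcal{X})$ correction together with a union bound is designed to absorb. The converse, by contrast, is essentially a matter of selecting the confounding structure so that the two model\nobreakdash-irrelevance conditions collapse onto the same functional form over the training set, making the verification routine once the example is set up.
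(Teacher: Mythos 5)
Your forward direction is essentially the paper's proof: the paper disposes of identifiability in one sentence, as "the iterative application of the identifiability result of \citet{peters2016icp} to each variable in the causal variables set," and your recursion along the traversal of \cref{alg:linear_misa} together with the $\alpha/\dim(\mathcal{X})$ union bound is just that argument written out (the confidence correction matches the paper's own remark in the Methods section). Nothing to change there.

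Your converse, however, uses a genuinely different counterexample, and it has a gap. The paper takes a variable $x_m$ that is \emph{constant} (equal to $v_m$) in every training environment and shows that the strict superset $S\cup\{m\}$ is also a model-irrelevance abstraction over $\mathcal{E}_{\text{train}}$, broken by a test intervention $x_m \gets v_m + \mathcal{N}(0,\sigma^2)$. You instead duplicate a latent coordinate, setting $x^1_t = x^2_t = s^1_t$ throughout training, and propose $\phi'(x)=[x]_{\{2\}}$ as the spurious abstraction. The problem is that the proposition is stated "in the setting of the previous theorem," i.e.\ under Assumptions 1--3, and Assumption 2 requires $X^1_{t+1} \perp X^2_{t+1} \mid x_t$ in the unintervened environment. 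Two coordinates that are exact copies of the same stochastic latent variable are perfectly correlated given $x_t$, so your construction violates Assumption 2 unless the latent dynamics are deterministic — a restriction you would need to state explicitly, and which also undercuts your appeal to "the independent noise assumption" when you claim the factorization in the proof of \cref{thm:causalstate_modelirrelevance} applies verbatim with $\{2\}$ in place of $\{1\}$. A secondary issue is that with $x^1 \equiv x^2$ the set $S_R$ ("the set of variables such that the reward is a function only of $[x]_{S_R}$") is no longer well defined, since either singleton qualifies, so the object $\phi_S$ whose uniqueness you are refuting is itself ambiguous in your example. The paper's constant-variable construction avoids both problems: a degenerate coordinate is trivially independent of everything, and the reward genuinely does not depend on $x_m$, so $S_R$ and $S$ are unambiguous while $S\cup\{m\}$ still passes both model-irrelevance conditions on $\mathcal{E}_{\text{train}}$. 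Either replace your example with the paper's, or add the deterministic-dynamics caveat and give $x^2$ no role in the reward's functional form.
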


\begin{proof}
The proof of the first statement follows immediately from the iterative application of the identifiability result of \citet{peters2016icp} to each variable in the causal variables set.

For the converse, we consider a simple counterexample in which one variable $x_m$ is constant in every training environment, with value $v_m$. Then letting $S = \textbf{AN}(R)$, we observe that $S \cup \{m\}$ is also a model-irrelevance state abstraction.
First, we show $r(x_1)=r(x_2)$ for any $x_1,x_2:\phi_{S\cup\{m\}}(x_1)=\phi_{S\cup\{m\}}(x_2)$. 
\begin{equation*}
\begin{split}
    p(R|x_1,a)&=p(R|x_1|_S,a) \\
    &=p(R|x_1|_{S\cup\{m\}},a,m=v_m) \\
    &=p(R|(x_2|_{S\cup\{m\}},a,m=v_m) \\
    &=p(R|x_2,a)
\end{split}
\end{equation*}

Finally, we must show that $$\sum_{x' \in \phi_{S\cup\{m\}}^{-1}(\bar{X})} P_{x_1x'} = \sum_{x' \in \phi_{S\cup\{m\}}^{-1}(\bar{X})} P_{x_2x'}.$$
Again starting from the result of \cref{thm:causalstate_modelirrelevance} we have:
\begin{align*}
    \sum_{x' \in \phi_{S\cup\{m\}}^{-1}(\bar{x})} P_{x_1x'} &= \sum_{x' \in \phi_{S\cup\{m\}}^{-1}(\bar{x})} f(x_1|_{S\cup\{m\}},x'|_{S\cup\{m\}}) p(x'| x_1|_{(S\cup\{m\})^C}, m=v_m) \\
    &= f(\phi_{S\cup\{m\}}(x_1), \bar{x})\sum_{x' \in \phi_{S\cup\{m\}}^{-1}(\bar{x})} p(x' | x_1,m=v_m)\\
    &= f(\phi_{S\cup\{m\}}(x_1), \bar{x}) \\
\intertext{and because $\phi_{S\cup\{m\}}(x_1) = \phi_{S\cup\{m\}}(x_2)$, we have}
    &= f(\phi_{S\cup\{m\}}(x_2), \bar{x}) \\
\intertext{for which we can apply the previous chain of equalities backward to obtain}
&= \sum_{x' \in \phi_{S\cup\{m\}}^{-1}(\bar{x})} P_{x_2x'}  \\
\end{align*}

However, if one of the test environments contains the intervention $x_m \gets v_m + \mathcal{N}(0,\sigma^2)$, then the distribution over next-states in the new environment will violate the conditions for a model-irrelevance abstraction.  
\end{proof}

\begin{theoremnum}[\ref{thm:model_error}]
Consider an MDP $M$, with $M'$ denoting a coarser bisimulation of $M$. Let $\phi$ denote the mapping from states of $M$ to states of $M'$. 
Suppose that the dynamics of $M$ are $L$-Lipschitz w.r.t. $\phi(X)$ and that $T$ is some approximate transition model satisfying $\max_{s} \mathbb{E}\|T(\phi(s)) - \phi(T_M(s)) \| < \delta$, for some $\delta > 0$. Let $W_1(\pi_1, \pi_2)$ denote the 1-Wasserstein distance. Then
\begin{equation}
    \mathbb{E}_{x \sim M'}[\|T(\phi(x)) - \phi(T_{M'}(x)) \|] \leq \delta + 2LW_1(\pi_{\phi(M)}, \pi_{\phi(M')}).
\end{equation}
\end{theoremnum}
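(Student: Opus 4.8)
The plan is to decompose the target expectation into an on-distribution error term, controlled by the hypothesis, and a distribution-shift term, controlled by the Wasserstein distance via Kantorovich--Rubinstein duality. Write the abstract model error as a function of the latent state, $G(z) := \|T(z) - \mathcal{T}(z)\|$, where $\mathcal{T}$ denotes the true abstract transition operator. The central observation is that, since $M'$ is a bisimulation of $M$ with quotient map $\phi$, the pushforward $\phi \circ T_M$ and the quotient dynamics $T_{M'}$ coincide as distributions over abstract states; hence the integrand appearing in the hypothesis, $\|T(\phi(s)) - \phi(T_M(s))\|$, and the one appearing in the conclusion, $\|T(\phi(x)) - \phi(T_{M'}(x))\|$, are one and the same function $G$ evaluated on the shared latent space. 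This identification is what legitimizes comparing the two expectations.

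With $G$ fixed, I would add and subtract its expectation under the training distribution $\pi_{\phi(M)}$:
\begin{equation*}
\mathbb{E}_{z \sim \pi_{\phi(M')}}[G(z)] = \mathbb{E}_{z \sim \pi_{\phi(M)}}[G(z)] + \Big(\mathbb{E}_{z \sim \pi_{\phi(M')}}[G(z)] - \mathbb{E}_{z \sim \pi_{\phi(M)}}[G(z)]\Big).
\end{equation*}
The first term is bounded by $\delta$: the hypothesis $\max_s \mathbb{E}\|T(\phi(s)) - \phi(T_M(s))\| < \delta$ bounds $G$ by $\delta$ on the latent states reachable in $M$, and $\pi_{\phi(M)}$ is supported there, so its average stays below $\delta$. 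For the second, distribution-shift term, I would invoke the Kantorovich--Rubinstein dual characterization of the $1$-Wasserstein distance: for any $K$-Lipschitz $f$, one has $\mathbb{E}_\mu[f] - \mathbb{E}_\nu[f] \le K\, W_1(\mu,\nu)$.

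It then remains to bound the Lipschitz constant of $G$. Since $\mathcal{T}$ is $L$-Lipschitz (the dynamics of $M$ are $L$-Lipschitz w.r.t.\ $\phi(X)$) and the learned model $T$ is likewise $L$-Lipschitz, their difference $T - \mathcal{T}$ is $2L$-Lipschitz; composing with the $1$-Lipschitz norm shows $G$ is $2L$-Lipschitz. Applying the duality bound with $K = 2L$, $\mu = \pi_{\phi(M')}$, and $\nu = \pi_{\phi(M)}$ yields the term $2L\, W_1(\pi_{\phi(M)}, \pi_{\phi(M')})$, which combined with the $\delta$ bound gives the claim.

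The main obstacle is the bisimulation identification in the first step: one must verify that evaluating the learned model on the test distribution's latent states and comparing against the test dynamics $T_{M'}$ really reduces to the same integrand $G$ that the hypothesis controls on the training side, which relies on $\phi$ commuting with the transition operators in the bisimulation sense. The related conceptual crux is that the hypothesis only pins down $G$ on the latent states reachable under $M$, so the uniform $\delta$ bound cannot be applied directly under $\pi_{\phi(M')}$; the Wasserstein term is precisely what transports the guarantee from the training support to the test support, which is why the $L$-Lipschitz continuity of the dynamics is essential. A secondary point of care is the Lipschitz bookkeeping: the factor $2L$ requires that \emph{both} the true abstract dynamics and the approximate model $T$ be $L$-Lipschitz, so that the error is a difference of two $L$-Lipschitz maps; if only the dynamics are assumed Lipschitz, one must additionally assume $T$ is trained to be Lipschitz or absorb its constant into $L$.
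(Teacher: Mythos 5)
Your proof is correct and is essentially the paper's argument: both decompose the error into an on-distribution term bounded by $\delta$ and a distribution-shift term bounded by $2L\,W_1(\pi_{\phi(M)},\pi_{\phi(M')})$, the only difference being that you invoke the Kantorovich--Rubinstein dual form of $W_1$ on the $2L$-Lipschitz error function $G$, whereas the paper carries out the equivalent primal argument explicitly, via a pointwise triangle inequality through an intermediate state $y$ and an optimal coupling $\gamma$ of $\pi_{\phi(M)}$ and $\pi_{\phi(M')}$. Your closing caveat is well taken: the paper's proof also uses $\|T(\phi(x))-T(\phi(y))\|\le L\|\phi(x)-\phi(y)\|$, i.e.\ it implicitly assumes the learned model $T$ is itself $L$-Lipschitz even though the theorem statement only asserts this of the true dynamics.
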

We will use the shorthand $\pi$ for $\pi_{\phi(M)}$, the distribution of state embeddings $\phi(M)$ corresponding to the behaviour policy, and $\pi'$ for $\pi_{\phi(M')}$ for the distribution of state embeddings $\phi(M')$ given by the behaviour policy.
\begin{proof}
\begin{align*}
    \mathbb{E}_{x \sim M'} [\|T(\phi(x)) - \phi(T_{M'}(x)) \|] &= \mathbb{E}_{x \sim M'} [\min_{y \in X_M} \|T(\phi(x)) - T(\phi(y)) + T(\phi(y)) - \phi(T_M(x)) \|] \\
    &\leq \mathbb{E}_{x \sim M'} [\min_{y \in X_M} \|T(\phi(x)) - T(\phi(y))\|
    \\ &+ \|T(\phi(y)) - \phi(T_M(y))\|+\| \phi(T_M(y)) - \phi(T_M(x))\|]\\
    \intertext{Let $\gamma$ be a coupling over the distributions of $\phi(M')$ and $\phi(M)$ such that $\mathbb{E}_{\gamma(\phi(x),\phi(y))} \|\phi(x)-\phi(y)\|= W_1(\pi, \pi')$}
    &\leq \mathbb{E}_{x \sim M'} [\mathbb{E}_{\gamma(\phi(y)|\phi(x))} \|T(\phi(x)) - T(\phi(y))\|] + \delta + L\|x-y\|]\\
    &\leq \mathbb{E}_{x \sim M'} [\mathbb{E}_{\gamma(\phi(y)|\phi(x))} L\|\phi(x) - \phi(y)\|+ \delta + L\|\phi(x)-\phi(y)\| ] \\
    &=  \mathbb{E}_{\gamma(\phi(x),\phi(y))}[L\|\phi(x) - \phi(y)\|+ \delta + L\|\phi(x)-\phi(y)\|] \\
    &= 2LW_1(\pi, \pi') + \delta
\end{align*}
\end{proof}

\begin{theorem}[Existence of model-irrelevance state abstractions]\label{thm:existence}
Let $\mathcal{E}$ denote some family of bisimilar MDPs with joint state space $\mathcal{X}_\mathcal{E} = \cup_{e \in \mathcal{E}} X_e$. Let the mapping from states in $M_e$ to the underlying abstract MDP $\bar{M}$ be denoted by $f_e$. Then if the states in $X_{\mathcal{E}}$ satisfy $x \in X_{e'} \cap X_e \implies f_e (x) = f_{e'}(x)$, then $\phi = \cup f_e$ is a model-irrelevance state abstraction for $\mathcal{E}$.
\end{theorem}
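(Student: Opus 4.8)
The plan is to verify directly that the union map $\phi = \cup_e f_e$ satisfies the two defining conditions of a model-irrelevance (equivalently, bisimulation) abstraction on the joint state space $\mathcal{X}_\mathcal{E}$ — reward preservation and next-equivalence-class preservation — exploiting the hypothesis that each $f_e$ is a bisimulation from $M_e$ onto the common abstract MDP $\bar{M}$.

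First I would check that $\phi$ is well-defined as a function on $\mathcal{X}_\mathcal{E}$. Every $x \in \mathcal{X}_\mathcal{E}$ lies in at least one $X_e$, so we set $\phi(x) := f_e(x)$ for such an $e$; the consistency hypothesis $x \in X_e \cap X_{e'} \implies f_e(x) = f_{e'}(x)$ guarantees this assignment is independent of the chosen $e$, so $\phi$ is single-valued. This is precisely where the hypothesis is needed: without it the union of the $f_e$ would fail to be a function on the overlaps.

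Next, fix $x_1, x_2$ with $\phi(x_1) = \phi(x_2) = \bar{s}$, say $x_1 \in X_{e_1}$ and $x_2 \in X_{e_2}$. For the reward condition I would invoke that $f_{e_i}$ is a bisimulation onto $\bar{M}$, so the reward at $x_i$ equals the abstract reward $\bar{R}(\bar{s}, a)$; since both states map to the same $\bar{s}$, their rewards coincide for every action $a$. For the transition condition, let $G_{\bar{s}'} = \phi^{-1}(\bar{s}')$ be a $\phi$-equivalence class. I would observe that transitions of the super-MDP remain within a single environment, so $P(G_{\bar{s}'} \mid x_1, a) = \sum_{x' \in G_{\bar{s}'} \cap X_{e_1}} P_{e_1}(x' \mid x_1, a)$, and that $G_{\bar{s}'} \cap X_{e_1} = f_{e_1}^{-1}(\bar{s}')$ because $\phi$ agrees with $f_{e_1}$ on $X_{e_1}$. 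The bisimulation property of $f_{e_1}$ then collapses this sum to the abstract transition probability $\bar{P}(\bar{s}' \mid \bar{s}, a)$, and the identical computation for $x_2$ through $f_{e_2}$ yields the same value. Hence the next-class distributions agree, which completes the verification that $\phi$ is a bisimulation, i.e. a model-irrelevance abstraction.

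The main obstacle is conceptual rather than computational: one must interpret the abstraction on the joint space $\mathcal{X}_\mathcal{E}$ carefully, noting that a single $\phi$-class $G_{\bar{s}'}$ generally pools states from many environments, yet a transition out of $x_1 \in X_{e_1}$ only ever lands back in $X_{e_1}$, so the sum defining $P(G_{\bar{s}'} \mid x_1, a)$ effectively restricts to $G_{\bar{s}'} \cap X_{e_1} = f_{e_1}^{-1}(\bar{s}')$. Keeping this restriction straight is what allows each environment's bisimulation property to be applied independently while still producing the common answer $\bar{P}(\bar{s}' \mid \bar{s}, a)$ determined solely by $\bar{s}$; the consistency hypothesis is exactly what guarantees these per-environment computations are being compared at the same abstract state.
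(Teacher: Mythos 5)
Your proof is correct and follows essentially the same route as the paper's: establish well-definedness of $\phi = \cup_e f_e$ from the consistency hypothesis on overlaps, then transfer the bisimulation property of each $f_e$ to $\phi$. The paper states this second step in one line ("$\phi$ agrees with $f_e$, a model-irrelevance abstraction"), whereas you usefully fill in the detail it leaves implicit --- that transitions stay within a single environment, so $\phi^{-1}(\bar{s}') \cap X_{e} = f_{e}^{-1}(\bar{s}')$ and each environment's next-class sums collapse to the same abstract transition probability.
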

\begin{proof}
First, note that $\cup f_e$ is well-defined (because each $f$ agrees with the rest on the value of all states appearing in multiple tasks). Then $\phi$ will be a model-irrelevance abstraction for every MDP $M_e$ because it agrees with $f_e$ (a model-irrelevance abstraction).
\end{proof}
 \begin{theoremnum}[\ref{thm:value_bounds}]
    Let $M$ be our block MDP and $\bar{M}$ the learned invariant MDP with a mapping $\phi:\mathcal{X}\mapsto \mathcal{Z}$. For any $L$-Lipschitz valued policy $\pi$ the value difference is bounded by
    \begin{equation}
        |Q^\pi(x,a) - \bar{Q}^\pi (\phi(x),a)|\leq \frac{J_R^\infty + \gamma LJ_D^\infty}{1-\gamma}.
    \end{equation}
    \end{theoremnum}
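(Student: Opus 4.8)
The plan is to run a standard simulation-lemma argument: expand both $Q^\pi$ and $\bar Q^\pi$ via their Bellman equations, bound the difference by a reward term and a transition term, and close a self-referential recursion to extract the $\frac{1}{1-\gamma}$ factor. Writing $\Delta := \sup_{x,a}|Q^\pi(x,a) - \bar Q^\pi(\phi(x),a)|$, the goal is to establish the inequality $\Delta \le J_R^\infty + \gamma(\Delta + L J_D^\infty)$, which rearranges immediately to the claimed bound.

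First I would write $Q^\pi(x,a) = r(x,a) + \gamma\,\mathbb{E}_{x'\sim P(x,a)}[V^\pi(x')]$ and $\bar Q^\pi(\phi(x),a) = R(\phi(x),a,\phi(x')) + \gamma\,\mathbb{E}_{z'\sim f_s(\phi(x),a)}[\bar V^\pi(z')]$, where $V^\pi$ and $\bar V^\pi$ are the state-value functions induced by $\pi$ in $M$ and $\bar M$ respectively. Subtracting and applying the triangle inequality splits the bound into a reward discrepancy, controlled directly by $J_R^\infty$ by its definition as a supremum, and a discounted transition discrepancy $\gamma\,|\mathbb{E}_{x'\sim P(x,a)}[V^\pi(x')] - \mathbb{E}_{z'\sim f_s(\phi(x),a)}[\bar V^\pi(z')]|$.

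The crux is the transition discrepancy, where the two value functions live on different spaces ($V^\pi$ on $\mathcal{X}$, $\bar V^\pi$ on $\mathcal{Z}$). I would introduce $\bar V^\pi(\phi(\cdot))$ as an intermediate quantity and split the term in two. The first piece, $\mathbb{E}_{x'\sim P(x,a)}[V^\pi(x') - \bar V^\pi(\phi(x'))]$, is bounded by $\Delta$ after averaging $Q^\pi - \bar Q^\pi$ over actions drawn from $\pi$; this is exactly the self-reference that produces the recursion. For the second piece I would rewrite $\mathbb{E}_{x'\sim P(x,a)}[\bar V^\pi(\phi(x'))]$ as $\mathbb{E}_{z'\sim \phi P(x,a)}[\bar V^\pi(z')]$ using the pushforward of $P(x,a)$ through $\phi$, then compare it against $\mathbb{E}_{z'\sim f_s(\phi(x),a)}[\bar V^\pi(z')]$. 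Because $\pi$ is $L$-Lipschitz valued, $\bar V^\pi$ is $L$-Lipschitz, so Kantorovich--Rubinstein duality bounds this difference by $L\,W_1(\phi P(x,a), f_s(\phi(x),a)) \le L J_D^\infty$.

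Finally I would take the supremum over $(x,a)$ on both sides to obtain $\Delta \le J_R^\infty + \gamma \Delta + \gamma L J_D^\infty$ and solve for $\Delta$. The main obstacle is the cross-space bookkeeping in the transition term: one must verify that the pushforward identity $\mathbb{E}_{x'\sim P(x,a)}[\bar V^\pi(\phi(x'))] = \mathbb{E}_{z'\sim \phi P(x,a)}[\bar V^\pi(z')]$ holds and that the Lipschitz constant used in the duality step matches the stated assumption on the policy's value, since this is what licenses trading the $W_1$ distance for a bound on expected value differences. The reward subtlety — that $J_R^\infty$ is stated with a next-state argument $\phi(x')$ whereas the Bellman reward may be taken in expectation over $x'$ — is immaterial, since the supremum defining $J_R^\infty$ dominates any such expectation.
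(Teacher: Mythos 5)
Your proposal is correct and follows essentially the same route as the paper's proof: Bellman expansion, triangle inequality to split off the reward term, insertion of $\bar V^\pi(\phi(\cdot))$ as an intermediate quantity, Kantorovich--Rubinstein duality with the $L$-Lipschitz assumption to convert the pushforward-vs-model discrepancy into $L J_D^\infty$, and closing the self-referential recursion to obtain the $\frac{1}{1-\gamma}$ factor. No substantive differences to report.
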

\begin{proof}
    \begin{equation*}
    \begin{split}
        &\sup_{x_t\in\mathcal{X},a_t\in\mathcal{A}}|Q^\pi(x_t,a_t) - \bar{Q}^\pi (\phi(x_t),a_t)| \\
        &\qquad\leq \sup_{x_t\in\mathcal{X},a_t\in\mathcal{A}}|R(\phi(x_t),a,\phi(x_{t+1})) - r(x, a)| + \gamma \sup_{x_t\in\mathcal{X},a_t\in\mathcal{A}}|\mathbb{E}_{x_{t+1}\sim P(\cdot|x_t,a_t)}V^\pi(x_{t+1}) - \mathbb{E}_{z_{t+1}\sim f(\cdot|\phi(x_t),a_t)}\bar{V}^\pi(z_{t+1})| \\
        &\qquad= J_R^\infty + \gamma \sup_{x_t\in\mathcal{X},a_t\in\mathcal{A}}\big|\mathbb{E}_{x_{t+1}\sim P(\cdot|x_t,a_t)}[V^\pi(x_{t+1}) - \bar{V}^\pi(\phi(x_{t+1}))] + \mathbb{E}_{\substack{x_{t+1}\sim P(\cdot|x_t,a_t) \\ z_{t+1}\sim f(\cdot|\phi(x_t),a_t)}}[\bar{V}^\pi(\phi(x_{t+1})) - \bar{V}^\pi(z_{t+1})]\big| \\
        &\qquad\leq J_R^\infty + \gamma \sup_{x_t\in\mathcal{X},a_t\in\mathcal{A}}\big|\mathbb{E}_{x_{t+1}\sim P(\cdot|x_t,a_t)}[V^\pi(x_{t+1}) - \bar{V}^\pi(\phi(x_{t+1}))]\big|  \\
        &\qquad \phantom{\leq J_R^\infty~} + \gamma \sup_{x_t\in\mathcal{X},a_t\in\mathcal{A}} \big| \mathbb{E}_{\substack{x_{t+1}\sim P(\cdot|x_t,a_t) \\ z_{t+1}\sim f(\cdot|\phi(x_t),a_t)}}[\bar{V}^\pi(\phi(x_{t+1})) - \bar{V}^\pi(z_{t+1})]\big| \\
        &\qquad\leq J_R^\infty + \gamma \sup_{x_t\in\mathcal{X},a_t\in\mathcal{A}}\big|\mathbb{E}_{x_{t+1}\sim P(\cdot|x_t,a_t)}[V^\pi(x_{t+1}) - \bar{V}^\pi(\phi(x_{t+1}))]\big| + \gamma L \sup_{x_t\in\mathcal{X},a_t\in\mathcal{A}} W(\phi(P(\cdot|x_t,a_t)), f(\cdot|\phi(x_t),a_t)) \\
        &\qquad= J_R^\infty + \gamma \sup_{x_t\in\mathcal{X},a_t\in\mathcal{A}}\big|\mathbb{E}_{x_{t+1}\sim P(\cdot|x_t,a_t)}[V^\pi(x_{t+1}) - \bar{V}^\pi(\phi(x_{t+1}))]\big| + \gamma L J_D^\infty\\
        &\qquad\leq J_R^\infty + \gamma \sup_{x_t\in\mathcal{X},a_t\in\mathcal{A}}\mathbb{E}_{x_{t+1}\sim P(\cdot|x_t,a_t)}\big|[V^\pi(x_{t+1}) - \bar{V}^\pi(\phi(x_{t+1}))]\big| + \gamma L J_D^\infty\\
        &\qquad\leq J_R^\infty + \gamma \sup_{x_t\in\mathcal{X},a_t\in\mathcal{A}}\big|[V^\pi(x_t) - \bar{V}^\pi(\phi(x_t))]\big| + \gamma L J_D^\infty\\
        &\qquad\leq J_R^\infty + \gamma \sup_{x_t\in\mathcal{X},a_t\in\mathcal{A}}\big|[Q^\pi(x_{t-1},a_{t-1}) - \bar{Q}^\pi(\phi(x_{t-1}),a_{t-1})]\big| + \gamma L J_D^\infty\\
        &\qquad= \frac{J_R^\infty + \gamma LJ_D^\infty}{1-\gamma}
    \end{split}
    \end{equation*}
    \end{proof}

\begin{proposition}[Lower bound on abstraction error]\label{thm:nonrealizable}
Let $f_e$ be a mapping from $\mathcal{S} \rightarrow \mathcal{X}$. Fix some arbitrary policy $\rho$ and let $v(s)$ denote the value of state $s$ under $\rho$, with $\pi$ its stationary distribution. If $\exists $ $e, e', s, s'$ such that $f_e(s) = f_{e'}(s')$ (i.e. different states induce the same observation), then the following bound is a lower bound on the error obtained by a joint state abstraction over all environments.

\begin{equation}
\min_{\hat{v}}\frac{1}{|\mathcal{E}|} \sum_{e \in \mathcal{E}} \text{err}(\phi(X_e), \hat{v}) \geq \min_{s, s':v(s) \neq v(s')}
\bigg ( |v(s) - v(s')| \bigg ) P_\mathcal{E} \bigg ( (\phi(x) \neq f_e^{-1}(x) \bigg) \ge \delta \frac{H(V(S)|X) - 1}{\log |V(S)|}
\end{equation}
Where
\begin{equation*}
    \text{err}(\phi(X_e), \hat{v}) := \mathbb{E}_{\pi(X_e)} |\hat{v}(\phi(x)) - v(f_e^{-1}(x))|
\end{equation*}
and
\begin{equation*}
   \delta =  \min_{s, s':v(s) \neq v(s')}
\bigg ( |v(s) - v(s')| \bigg )
\end{equation*}
\end{proposition}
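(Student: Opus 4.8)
The plan is to recast the averaged value-prediction error as the Bayes $L_1$-risk of estimating a discrete value variable from the observation, and then split the claimed chain into (i) a $\delta$-separation argument turning that $L_1$-risk into a misprediction probability and (ii) Fano's inequality controlling that probability. First I would fix the joint law: draw $e$ uniformly from $\mathcal{E}$, then $s \sim \pi_e$, and emit $x = f_e(s)$; this couples the observation $X$, the latent state $S$, and its value $V = v(S)$. Under this law the objective collapses to $\frac{1}{|\mathcal{E}|}\sum_{e} \mathrm{err}(\phi(X_e), \hat v) = \mathbb{E}\,|\hat v(\phi(X)) - V|$, so the left-hand side is exactly the $L_1$-risk of estimating $V$ through the fixed map $\phi$ followed by a decoder $\hat v$. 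The role of the collision hypothesis $f_e(s) = f_{e'}(s')$ (with the two states having distinct values) is precisely to force $H(V \mid X) > 0$: a single observation is consistent with latent states of differing value, so $V$ is not a deterministic function of $X$ and the problem is non-realizable.

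For the first inequality I would pass from the real-valued decoder to a discrete predictor. Given any $\hat v$, let $\hat V(x)$ be the element of the finite value set $V(S)$ nearest to $\hat v(\phi(x))$; this $\hat V$ is a deterministic function of $X$. Since by definition any two distinct values differ by at least $\delta$, whenever $\hat V(x) \ne V$ the raw prediction $\hat v(\phi(x))$ must sit at distance at least $\delta/2$ from $V$, giving the pointwise bound $|\hat v(\phi(x)) - V| \ge \tfrac{\delta}{2}\,\mathbbm{1}[\hat V(x) \ne V]$; the clean constant $\delta$ is recovered by restricting, without loss for the $L_1$-risk, to decoders valued in $V(S)$, for which $\hat v(\phi(x)) \ne V$ immediately forces a gap of at least $\delta$. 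Taking expectations yields $\mathbb{E}\,|\hat v(\phi(X)) - V| \ge \delta\, P_{\mathcal{E}}(\hat V \ne V)$, where the misprediction event $\{\hat V \ne V\}$ is the meaning I attach to the event $\{\phi(x) \ne f_e^{-1}(x)\}$ in the statement: through its best decoder, the abstraction fails to return the correct value of the generating state. As this holds for every $\hat v$, it holds for the minimizer.

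The second inequality is then a direct application of Fano's inequality to the predictor $\hat V = g(X)$ of $V$ over the alphabet $V(S)$. Writing $P_e = P(\hat V \ne V)$ and using $H(V \mid X) \le H(V \mid \hat V) \le H_b(P_e) + P_e \log(|V(S)| - 1) \le 1 + P_e \log|V(S)|$, rearrangement gives $P_{\mathcal{E}}(\phi(x) \ne f_e^{-1}(x)) = P_e \ge \frac{H(V(S) \mid X) - 1}{\log|V(S)|}$. Chaining this with the previous display completes the argument.

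I expect the main obstacle to be the first step: making the reduction from the continuous $L_1$ loss to the $0/1$ misprediction loss rigorous while keeping the leading constant exactly $\delta$ and correctly aligning the abstract prediction-error event with $\{\phi(x) \ne f_e^{-1}(x)\}$. One must also be explicit that the minimum over $\hat v$ ranges over decoders on the fixed, possibly lossy representation $\phi(X)$ rather than over all functions of $X$; because Fano lower-bounds the error of \emph{every} function of $X$, this coarsening only strengthens the inequality, but the point should be stated. A secondary check is that the bound is non-vacuous (right-hand side positive) precisely when the collision of differently-valued states makes $H(V(S) \mid X)$ exceed $1$, which is the qualitative message of the result.
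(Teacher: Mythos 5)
Your proposal follows the same route as the paper's (very brief) proof sketch: recast the averaged $L_1$ error under the joint law given by a uniform environment, the stationary state distribution, and the emission $x = f_e(s)$, lower-bound it by $\delta$ times a value-misprediction probability, and then apply Fano's inequality. You in fact supply details the paper leaves implicit --- notably the restriction to decoders valued in $V(S)$ so the constant is $\delta$ rather than $\delta/2$, and the precise reading of the middle event as ``the decoded value is wrong'' --- so this is a correct and more complete rendering of the same argument.
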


\begin{proof}
(Sketch) The error obtained by state abstraction will be at least the decoding error of values from abstract states scaled by $\delta$. This in turn depends on how effectively it is possible to decode a potentially lossy mapping from observations back to states. This leads to the second inequality, due to Fano, where the entropy $H(V(S)|X)$ is given by marginalizatiion with respect to $v(s)$ of the following probability distributions.

\begin{align*}
p(x) &= \frac{1}{|\mathcal{E}|} \sum_{s,e} \mathbbm{1}[f_e(s) = x] \pi(s) \\
p(s|x) &=\frac{1}{p(x)} \frac{1}{|\mathcal{E}|} \sum_{e}  \pi(s)
\end{align*}
\end{proof}

\section{Implementation Details}
\subsection{Model Learning: Rich Observations}
\label{app:model_nonlinear_implementation}
For the model learning experiments we use an almost identical encoder architecture as in~\citet{deepmindcontrolsuite2018}, with two more convolutional layers to the convnet trunk. Secondly, we use \texttt{ReLU} activations after each convolutional layer, instead of \texttt{ELU}. We use kernels of size $3 \times 3$ with $32$ channels  for all the convolutional layers and set stride to $1$ everywhere, except of the first convolutional layer, which has stride $2$. We then take the output of the convolutional net and feed it into a single fully-connected layer normalized by \texttt{LayerNorm}~\citep{ba2016layernorm}. Finally, we add \texttt{tanh} nonlinearity to the $50$ dimensional output of the fully-connected layer.

The decoder consists of one fully-connected layer that is then followed by four deconvolutional layers. We use \texttt{ReLU} activations after each layer, except the final deconvolutional layer that produces pixels representation. Each deconvolutional layer has kernels of size $3 \times 3$ with $32$ channels and stride $1$, except of the last layer, where stride is $2$.

The dynamics and reward models are all MLPs with two hidden layers with 200 neurons each and \texttt{ReLU} activations.

\subsection{Reinforcement Learning}
\label{app:rl_implementation}
For the reinforcement learning experiments we modify the Soft Actor-Critic PyTorch implementation by \citet{pytorch_sac} and augment with a shared encoder between the actor and critic, the general model $f_s$ and task-specific models $f_{\eta}^e$. The forward models are multi-layer perceptions with ReLU non-linearities and two hidden layers of 200 neurons each. The encoder is a linear layer that maps to a 50-dim hidden representation. We also use L1 regularization on the $S$ latent representation. We add two additional dimensions to the state space, a spurious correlation dimension that is a multiplicative factor of the last dimension of the ground truth state, as well as an environment id. We add Gaussian noise $\mathcal{N}(0, 0.01)$ to the original state dimension, similar to how \citet{arjovsky2019irm} incorporate noise in the label to make the task harder for the baseline.

Soft Actor Critic (SAC)~\cite{haarnoja2018sac} is an off-policy actor-critic method that uses the maximum entropy framework to derive soft policy iteration. At each iteration, SAC performs soft policy evaluation and improvement steps. The policy evaluation step fits a parametric soft Q-function $Q(x_t, a_t)$  using transitions sampled from the replay buffer $\mathcal{D}$ by minimizing the soft Bellman residual,
\begin{equation*}
    J(Q) = \mathbb{E}_{(x_t, x_t, r_t, x_{t+1}) \sim \mathcal{D}} \bigg[ \bigg(Q(x_t, a_t) - r_t - \gamma \Bar{V}(x_{t+1})\bigg)^2  \bigg].
\end{equation*}
The target value function $\Bar{V}$ is approximated via a Monte-Carlo estimate of the following expectation,
\begin{equation*}
    \Bar{V}(x_{t+1}) = \mathbb{E}_{a_{t+1} \sim \pi} \big[\Bar{Q}(x_{t+1}, a_{t+1}) - \alpha  \log \pi(a_{t+1}|x_{t+1}) \big],
\end{equation*}
where $\bar{Q}$ is the target soft Q-function parameterized by a weight vector obtained from an exponentially moving average of the Q-function weights to stabilize training. The  policy improvement step then attempts to project a parametric policy $\pi(a_t|x_t)$  by minimizing KL divergence between the  policy and a Boltzmann distribution induced by the Q-function, producing the following objective,
\begin{equation*}
    J(\pi)= \mathbb{E}_{x_t \sim \mathcal{D}} \bigg[ \mathbb{E}_{a_t \sim \pi} [\alpha \log (\pi(a_t | x_t)) - Q(x_t, a_t)] \bigg].
\end{equation*}

We provide the hyperparameters used for the RL experiments in \cref{table:rl_hyper_params}.

\begin{table}[hb!]
\centering
\begin{tabular}{|l|c|}
\hline
Parameter name        & Value \\
\hline
Replay buffer capacity & $1000000$ \\
Batch size & $1024$ \\
Discount $\gamma$ & $0.99$ \\
Optimizer & Adam \\
Critic learning rate & $10^{-5}$ \\
Critic target update frequency & $2$ \\
Critic Q-function soft-update rate $\tau_{\textrm{Q}}$ & 0.005 \\
Critic encoder soft-update rate $\tau_{\textrm{enc}}$ & 0.005 \\
Actor learning rate & $10^{-5}$ \\
Actor update frequency & $2$ \\
Actor log stddev bounds & $[-5, 2]$ \\
Encoder learning rate & $10^{-5}$ \\
Decoder learning rate & $10^{-5}$ \\
Decoder weight  decay & $10^{-7}$  \\
L1 regularization weight & $10^{-5}$ \\
Temperature learning rate & $10^{-4}$ \\
Temperature Adam's $\beta_1$ & $0.9$ \\
Init temperature & $0.1$ \\
\hline
\end{tabular}\\
\caption{\label{table:rl_hyper_params} A complete overview of used hyper parameters.}
\end{table}

\section{Additional Results}
\subsection{Reinforcement Learning}
We find that even without noise on the ground truth states, with only two environments, baseline SAC fails (\cref{fig:cartpole_swingup_rl_noise0}).

\begin{figure}[h]
    \centering
    \includegraphics[width=0.3\textwidth]{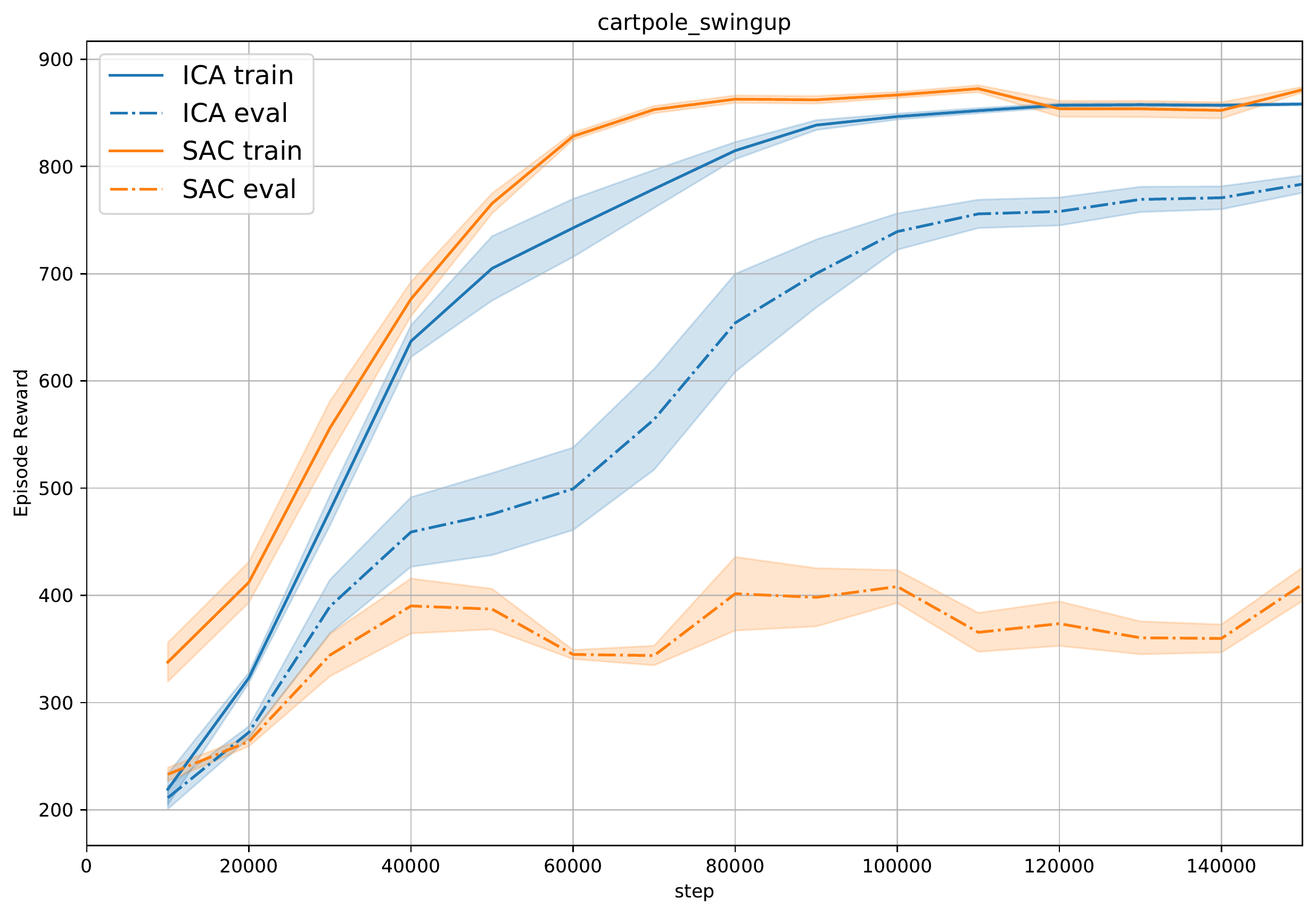}
    \caption{Generalization gap in SAC performance with 2 training environments on Cartpole Swingup from DMC. Evaluated with 10 seeds, standard error shaded.}
    \label{fig:cartpole_swingup_rl_noise0}
\end{figure}
\end{document}